\DeclareMathOperator*{\argmin}{arg\,min}
\newcommand{\argminprob}[1]{\underset{#1}{\argmin}}
\newcommand{\abs}[1]{\left|#1 \right|}
\newcommand{\expect}[2]{\mathbb{E}_{#1}\left[#2\right]}
\newcommand{\real}[0]{\mathbb{R}}
\newcommand{\realPositive}[0]{\real^+}
\newcommand{\Natural}[0]{\mathbb{N}}
\newcommand{\NaturalPositive}[0]{\Natural^+}
\newcommand{\bbm}{\begin{bmatrix}}
\newcommand{\ebm}{\end{bmatrix}}
\newcommand{\pair}[2]{\left( #1, #2\right)}
\newcommand{\tuple}[1]{\left\langle #1\right\rangle}
\newcommand{\Ind}[0]{\mathbb{I}}
\newcommand{\overbar}[1]{\mkern 1.5mu\overline{\mkern-1.5mu#1\mkern-1.5mu}\mkern 1.5mu}
\newcommand{\bigo}[1]{\mathcal{O}\left(#1\right)}
\newcommand{\dom}[1]{\mathrm{dom}_{#1}}
\newcommand{\state}[0]{s}
\newcommand{\stateSpace}[0]{\mathcal{S}}
\newcommand{\action}[0]{a}
\newcommand{\actionSpace}[0]{\mathcal{A}}
\newcommand{\transFnDef}[0]{P}
\newcommand{\transFn}[3]{\transFnDef \left(#3 | #1, #2 \right)}
\newcommand{\hor}[0]{T}
\newcommand{\policy}[0]{\pi}
\newcommand{\policyClass}[0]{\Pi}
\newcommand{\traj}[0]{\tau}
\newcommand{\trajSeg}[2]{\tau_{#1,#2}}
\newcommand{\dist}[0]{\rho}
\newcommand{\initDist}[0]{\dist_0}
\newcommand{\distTraj}[1]{\dist_{#1}}
\newcommand{\distState}[1]{\dist_{#1}}
\newcommand{\distStateTime}[2]{\dist_{#1}^{#2}}
\newcommand{\lossFn}[0]{\ell}
\newcommand{\expert}[0]{\policy^*}
\newcommand{\policyLearn}[0]{\hat{\policy}}
\newcommand{\badStateSpace}[0]{\stateSpace_\mathrm{bad}}
\newcommand{\fDiv}[2]{D_f\left(#1,#2\right)}
\newcommand{\klDiv}[2]{D_{\mathrm{KL}}\left(#1,#2\right)}
\newcommand{\rklDiv}[2]{D_{\mathrm{RKL}}\left(#1,#2\right)}
\newcommand{\tvDiv}[2]{D_{\mathrm{TV}}\left(#1,#2\right)}
\newcommand{\policyBar}[0]{\overbar{\policy}}
\newcommand{\Dagger}[0]{\textsc{DAgger}\xspace}
\newcommand{\fIL}[1]{\emph{#1}--\textsc{VIM}\xspace}
\newcommand{\ifIL}[1]{\emph{#1}--\emph{i}\textsc{VIM}\xspace}
\newcommand{\DrawExpert}{ 
    \draw [CornflowerBlue, line width=0.7mm] plot [smooth] coordinates { (0,-1) (2,1) (4,0)};
    \draw [CornflowerBlue, line width=0.7mm] plot [smooth] coordinates { (0,-1) (2,1.1) (4,-0.1)};
    \draw [CornflowerBlue, line width=0.7mm] plot [smooth] coordinates {(0,-1) (2,1.2) (0,4)};
    \draw [CornflowerBlue, line width=0.7mm] plot [smooth] coordinates {(0,-1) (1.8,1.2) (0.2,4)};
    \draw [CornflowerBlue, line width=0.7mm] plot [smooth] coordinates {(0,-1) (1.9,1.2) (0,3.9)};
}
\newcommand{\DrawPolicyLeft}{
    \draw [WildStrawberry, line width=0.3mm] plot [smooth] coordinates {(0,-1) (2,1.4) (0.1,4.1)};
    \draw [WildStrawberry, line width=0.3mm] plot [smooth] coordinates {(0,-1) (1.8,1.2) (0.1,3.8)};
    \draw [WildStrawberry, line width=0.3mm] plot [smooth] coordinates {(0,-1) (1.9,1.2) (-0.1,3.9)};
}
\newcommand{\DrawPolicyRight}[1]{
    \draw [WildStrawberry, line width=0.3mm] plot [smooth] coordinates { (0,-1) (2,1.2) (4.1,0.2-#1)};
    \draw [WildStrawberry, line width=0.3mm] plot [smooth] coordinates { (0,-1) (2,1.2) (3.9,-0.2-#1)};
}
\newcommand{\DrawPolicyCenter}{
    \draw [WildStrawberry, line width=0.3mm] plot [smooth] coordinates { (0,-1) (2,1.2) (4,1)};
    \draw [WildStrawberry, line width=0.3mm] plot [smooth] coordinates { (0,-1) (2,1.2) (4,1.8)};
    \draw [WildStrawberry, line width=0.3mm] plot [smooth] coordinates { (0,-1) (2,1.2) (3.5,3.5)};
    \draw [WildStrawberry, line width=0.3mm] plot [smooth] coordinates { (0,-1) (2,1.2) (2.5,4)};
    \draw [WildStrawberry, line width=0.3mm] plot [smooth] coordinates { (0,-1) (2,1.2) (1.8,4.3)};
}
\newcommand{\DrawEnv}{
    \node at (0, -1)[circle,fill,inner sep=1.5pt]{};
    \node[below left=0pt of {(0,-1)}] {\tiny $s_0$};
    \node [cloud, fill=ForestGreen, draw,cloud puffs=10,cloud puff arc=120, aspect=1, inner ysep=0em] at (3,2.3) {\tiny Tree};
}
\newcommand{\eref}[1]{(\ref{#1})} 
\newcommand{\sref}[1]{Section~\ref{#1}} 
\newcommand{\figref}[1]{Fig.~\ref{#1}} 
\newcommand{\tabref}[1]{Table~\ref{#1}} 
\newcommand{\algoref}[1]{Algorithm~\ref{#1}} 
\newcommand{\thmref}[1]{Theorem~\ref{#1}}
\newcommand{\lemref}[1]{Lemma~\ref{#1}}
\newcommand{\appref}[1]{Appendix~\ref{#1}}
\newtheorem{assum}{Assumption}[section]
\newcommand{\kaynote}[1]{{\xxnote{KK}{BurntOrange}{#1}}}
\newcommand{\xxnote}[3]{}
  \renewcommand{\xxnote}[3]{\color{#2}{#1: #3}}
\newcommand{\fullFigGap}[0]{\vspace{-1.5\baselineskip}} 
\begin{document}
\mainmatter              
%
\title{Imitation Learning as f-Divergence Minimization}
\titlerunning{f-Imitation}  
\author{Liyiming Ke\inst{1} \and Sanjiban Choudhury\inst{1} \and Matt Barnes \inst{1} \and
Wen Sun \inst{2} \and Gilwoo Lee \inst{1}  \and Siddhartha Srinivasa \inst{1}}
\authorrunning{Ke et al.} 
\tocauthor{} 
\institute{Paul G. Allen School of Computer Science \& Engineering, University of Washington. Seattle WA 98105, USA,\\
\email{\{kayke,sanjibac,mbarnes,gilwoo,siddh\}@cs.washington.edu},\\
\and
The Robotics Institute, Carnegie Mellon University, Pittsburgh PA 15213, USA, \\
\email{wensun@andrew.cmu.edu}}

\maketitle              


\begin{abstract}
We address the problem of imitation learning with multi-modal demonstrations. 
Instead of attempting to learn all modes, we argue that in many tasks it is sufficient to imitate any one of them. We show that the state-of-the-art methods such as GAIL and behavior cloning, due to their choice of loss function, often incorrectly interpolate between such modes. Our key insight is to minimize the right divergence between the learner and the expert state-action distributions, namely the reverse KL divergence or I-projection. We propose a general imitation learning framework for estimating and minimizing any $f$-Divergence. 
By plugging in different divergences, we are able to recover existing algorithms such as Behavior Cloning (Kullback-Leibler), GAIL (Jensen Shannon) and 
\Dagger (Total Variation).
 Empirical results show that our approximate I-projection technique is able to imitate multi-modal behaviors more reliably than GAIL and behavior cloning.
\keywords{machine learning, imitation learning, probabilistic reasoning}
\end{abstract}


%


\section{Introduction}
\label{sec:introduction}

We study the problem of imitation learning from demonstrations that have \emph{multiple modes}. This is often the case for tasks with multiple, diverse near-optimal solutions. Here the expert has no clear preference between different choices (e.g. navigating left or right around obstacles~\cite{ross2013learning}). Imperfect human-robot interface also lead to variability in inputs (e.g. kinesthetic demonstrations with robot arms~\cite{finn2016guided}). Experts may also vary in skill, preferences and other latent factors. We argue that in many such settings, it suffices to learn a single mode of the expert demonstrations to solve the task. How do state-of-the-art imitation learning approaches fare when presented with multi-modal inputs?

Consider the example of imitating a racecar driver navigating around an obstacle. 
The expert sometimes steers left, other times steers right. 
What happens if we apply behavior cloning~\cite{Pomerleau88} on this data?
The learner policy (a Gaussian with fixed variance) interpolates between the modes and drives into the obstacle.


Interestingly, this oddity is not restricted to behavior cloning. \cite{li2017infogail} show that a more sophisticated approach, GAIL~\cite{ho2016generative}, also exhibits a similar trend. Their proposed solution, InfoGAIL~\cite{li2017infogail}, tries to recover all the latent modes and learn a policy for each one. For demonstrations with several modes, recovering all such policies will be prohibitively slow to converge. 

\begin{figure}[!tb]
\centering
\includegraphics[width=0.9\columnwidth]{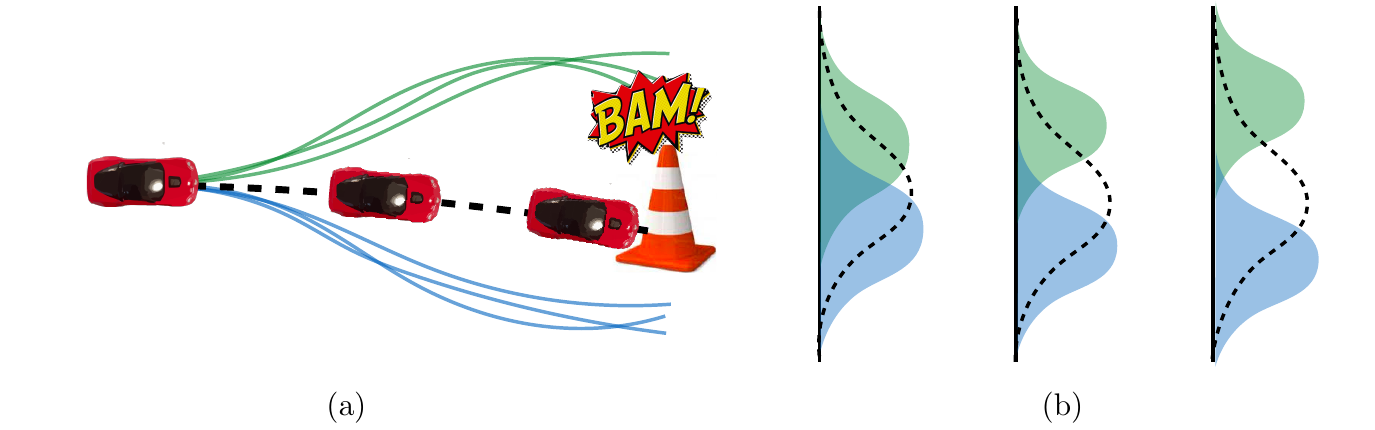}
\caption{Behavior cloning fails with multi-modal demonstrations. Experts go left or right around obstacle. Learner interpolates between modes and crashes into obstacle.\fullFigGap}
\label{fig:intro}
\end{figure}

Our key insight is to view imitation learning algorithms as minimizing divergence between the expert and the learner trajectory distributions. Specifically, we examine the family of $f$-divergences. Since they cannot be minimized exactly, we adopt estimators from \cite{nowozin2016f}. We show that behavior cloning minimizes the Kullback-Leibler (KL) divergence (M-projection), GAIL minimizes the Jensen-Shannon (JS) divergence and \Dagger minimizes the Total Variation (TV). Since both JS and KL divergence exhibit a \emph{mode-covering} behavior, they end up interpolating across modes. On the other hand, the reverse-KL divergence (I-projection) has a \emph{mode-seeking} behavior and elegantly collapses on a subset of modes fairly quickly.

The contributions and organization of the remainder of the paper 
are:
\begin{enumerate}
\item We introduce a unifying framework for imitation learning as minimization of $f$-divergence between learner and trajectory distributions (\sref{sec:formulation}).
\item We propose algorithms for minimizing estimates of any $f$-divergence. Our framework is able to recover several existing imitation learning algorithms for different divergences. We closely examine reverse KL divergence and propose efficient algorithms for it (\sref{sec:approach}).
\item We argue for using reverse KL to deal with multi-modal inputs (\sref{sec:divergence}). We empirically demonstrate that reverse KL collapses to one of the demonstrator modes on both bandit and RL environments, whereas KL and JS unsafely interpolate between the modes (\sref{sec:experiments}).
\end{enumerate}

\section{Related Work}
\label{sec:related_work}

Imitation learning (IL) has a long-standing history in robotics as a tool to program desired skills and behavior in autonomous machines~\cite{ROB-053,argall2009survey,billard2016learning,Bagnell-2015-5921}. Even though IL has of late been used to bootstrap reinforcement learning (RL)~\cite{ross2014reinforcement,sun2017deeply,sun2018truncated,cheng2018fast,rajeswaran2017learning}, we focus on the original problem where an extrinsic reward is not defined. We ask the question -- ``what objective captures the notion of similarity to expert demonstrations?''. Note that this question is orthogonal to other factors such as whether we are model-based / model-free or whether we use a policy / trajectory representation. 

IL can be viewed as supervised learning where the learner selects the same action as the expert (referred to as behavior cloning~\cite{pomerleau1989alvinn}). However small errors lead to large distribution mismatch. This can be somewhat alleviated by interactive learning, such as \Dagger~\cite{ross2011reduction}. Although shown to be successful in various applications~\cite{ross2013learning,kim2013learning,gupta2017cognitive}, there are domains where it's impractical to have on-policy expert labels~\cite{laskey2017iterative,laskey2016shiv}. More alarmingly, there are counter-examples where the \Dagger objective results in undesirable behaviors~\cite{laskey2017comparing}. We discuss this further in \appref{sec:existing_algorithms}.

Another way is to view IL as recovering a reward (IRL)~\cite{ratliff2009learning,ratliff2006maximum} or Q-value~\cite{piot2017bridging} that makes the expert seem optimal. Since this is overly strict, it can be relaxed to value matching which, for linear rewards, further reduces to matching feature expectations~\cite{abbeel2004apprenticeship}. Moment matching naturally leads to maximum entropy formulations~\cite{ziebart2008maximum} which has been used successfully in various applications~\cite{finn2016guided,wulfmeier2015maximum}. Interestingly, our divergence estimators also match moments suggesting a deeper connection.

The degeneracy issues of IRL can be alleviated by a game theoretic framework where an adversary selects a reward function and the learner must compete to do as well as the expert~\cite{syed2008game,ho2016model}. Hence IRL can be connected to min-max formulations~\cite{finn2016connection} like GANs~\cite{goodfellow2014generative}. GAIL~\cite{ho2016generative}, SAM~\cite{blonde2018sample} uses this to directly recover policies. AIRL~\cite{fu2017learning}, EAIRL~\cite{qureshi2018adversarial} uses this to recover rewards. This connection to GANs leads to interesting avenues such as stabilizing min-max games~\cite{peng2018variational}, learning from pure observations~\cite{torabi2018generative,torabi2018behavioral,peng2018sfv} and links to f-divergence minimization~\cite{nowozin2016f,nguyen2010estimating}. 

In this paper, we view IL as $f$-divergence minimization between learner and expert. Our framework encompasses methods that look at specific measures of divergence such as minimizing relative entropy~\cite{boularias2011relative} or symmetric cross-entropy~\cite{Rhinehart_2018_ECCV}. Note that \cite{Ghasemipour2019} also independently arrives at such connections between f-divergence and IL.\footnote{Different from \cite{Ghasemipour2019}, our framework optimizes \emph{trajectory} divergence.} We particularly focus on multi-modal expert demonstrations which has generally been treated by clustering data and learning on each cluster~\cite{babes2011apprenticeship,dimitrakakis2011bayesian}. InfoGAN~\cite{chen2016infogan} formalizes the GAN framework to recover latent clusters which is then extended to IL~\cite{hausman2017multi,li2017infogail}. MCTE~\cite{lee2018maximum} extended maximum entropy formulations with casual Tsallis entropy to learn sparse multi-model policy using sparse mixture density net~\cite{lee2018sparse}. ~\cite{belousov2017f} studied how choice of divergence affected policy improvement for reinforcement learning. Here, we look at the role of divergence with multi-model expert demonstrations.



\section{Problem Formulation}
\label{sec:formulation}

\paragraph{Preliminaries}
We work with a finite horizon Markov Decision Process (MDP) $\tuple{\stateSpace, \actionSpace, \transFnDef, \initDist, \hor}$ where $\stateSpace$ is a set of states, $\actionSpace$ is a set of actions, and $\transFnDef$ is the transition dynamics. $\initDist(\state)$ is the initial distribution over states and $\hor \in \NaturalPositive$ is the time horizon. In IL paradigm, the MDP does not include a reward function.

We examine stochastic policies $\policy(\action | \state) \in [0,1]$.
Let a trajectory be a sequence of state-action pairs $\traj = \{ \state_0, \action_1, \state_1, \dots, \action_\hor, \state_\hor\}$. It induces a distribution of trajectories $\distTraj{\policy}(\traj)$ and state $\distStateTime{\policy}{t}(\state)$ as:
\begin{equation}
\begin{split}
	\distTraj{\policy}(\traj) = \initDist(\state_0) \prod \limits_{t=1}^{\hor} \policy(\action_t | \state_{t-1}) \transFn{\state_{t-1}}{\action_t}{\state_{t}} \\
	\distStateTime{\policy}{t}(\state) = \sum\limits_{\state',\action} \distStateTime{\policy}{t-1}(\state')
\policy(\action | \state') \transFn{\state}{\action}{\state'} 
\end{split}
\end{equation}

The average state distribution across time $\distState{\policy}(\state) = \frac{1}{\hor} \sum_{t=1}^{\hor} \distStateTime{\policy}{t-1}(\state)$\footnote{Alternatively $\distState{\policy}(\state) = \sum_{\traj} \distTraj{\policy}(\traj) \left( \frac{1}{T}  \sum_{t=1}^T \Ind(\state_{t-1} = \state) \right)$. Refer to \thmref{thm:average_state_dist} in Appendix~\ref{sec:rkl_action_distribution}}.

\paragraph{The f-divergence family}
Divergences, such as the well known Kullback-Leibler (KL) divergence, measure differences between probability distributions. We consider a broad class of such divergences called \emph{f-divergences}~\cite{csiszar2004information,liese2006divergences}. Given probability distributions $p(x)$ and $q(x)$ over a finite set of random variables $X$, such that $p(x)$ is absolutely continuous w.r.t $q(x)$, we define the f-divergence:
\begin{equation}
\label{eq:prob:fdiv_def}
 \fDiv{p}{q} = \sum_x q(x) f \left( \frac{p(x)}{q(x)} \right)
\end{equation}
where $f: \realPositive \rightarrow \real$ is a convex, lower semi-continuous function. Different choices of $f$ recover different different divergences, e.g.\ KL, Jensen Shannon or Total Variation (see~\cite{nowozin2016f} for a full list).

\paragraph{Imitation learning as f-divergence minimization}

Imitation learning is the process by which a learner tries to behave similarly to an expert based on inference from demonstrations or interactions. There are a number of ways to formalize ``similarity'' (\sref{sec:related_work}) -- either  as a classification problem where learner must select the same action as the expert~\cite{ross2011reduction} or as an inverse RL problem where learner recovers a reward to explain expert behavior~\cite{ratliff2009learning}. Neither of the formulations is error free.

We argue that the metric we actually care about is matching the distribution of trajectories $\distTraj{\expert}(\traj) \approx \distTraj{\policy}(\traj)$. 
One such reasonable objective is to minimize the $f$-divergence between these distributions
\begin{equation}
\label{eq:prob:fdiv_il_min}
 \policyLearn = \argminprob{\policy \in \policyClass} \; \fDiv{\distTraj{\expert}(\traj)}{\distTraj{\policy}(\traj)} = \argminprob{\policy \in \policyClass} \sum_\traj \distTraj{\policy}(\traj) f \left( \frac{\distTraj{\expert}(\traj)}{\distTraj{\policy}(\traj)} \right)
\end{equation}
Interestingly, different choice of $f$-divergence leads to different learned policies (more in \sref{sec:divergence}).

Since we have only sample access to the expert state-action distribution, the divergence between the expert and the learner has to be estimated.
However, we need many samples to accurately estimate the trajectory distribution as the size of the trajectory space grows exponentially with time, i.e. $\bigo{\abs{\stateSpace}^\hor}$.
Instead, we can choose to minimize the divergence between the \emph{average state-action distribution} as the following:
\begin{equation}
\begin{split}
\label{eq:prob:sa_fdiv_il_min}
 \policyLearn &= \argminprob{\policy \in \policyClass} \; \fDiv{\distState{\expert}(\state)\expert(\action | \state)}{\distState{\policy}(\state)\policy(\action | \state)}  \\
 &= \argminprob{\policy \in \policyClass}  \; \sum_{\state, \action}  \distState{\policy}(\state) \policy(\action | \state) f \left( \frac{\distState{\expert}(\state) \expert(\action | \state)}{\distState{\policy}(\state) \policy(\action | \state)} \right)
 \end{split}
\end{equation}

We show that this lower bounds the original objective, i.e. trajectory distribution divergence.
\vspace{-.25em}
\begin{theorem}[Proof in Appendix~\ref{sec:state_action_distribution}]
\label{thm:state_action_distribution}
Given two policies $\pi$ and $\expert$, the f-divergence between trajectory distribution is lower bounded by f-divergence between average state-action distribution.
\begin{equation*}
\fDiv{\distTraj{\expert}(\traj)}{\distTraj{\policy}(\traj)} \geq \fDiv{\distTraj{\expert}(\state)\expert(\action | \state)}{\distTraj{\policy}(\state)\policy(\action | \state)}
\end{equation*}
\end{theorem}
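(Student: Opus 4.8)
The plan is to realize the average state--action distribution as the image of the trajectory distribution under a fixed stochastic map, and then invoke the data-processing inequality for $f$-divergences. The point is that one single channel (independent of the policy) transforms $\distTraj{\expert}$ into $\distState{\expert}(\state)\expert(\action\mid\state)$ and $\distTraj{\policy}$ into $\distState{\policy}(\state)\policy(\action\mid\state)$, so contractivity of $D_f$ under such a channel yields exactly the claimed inequality.

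\textbf{Step 1: Data-processing inequality for $f$-divergences.} I would first record the following fact. Let $K(y\mid x)$ be any transition kernel and let $p(y)=\sum_x K(y\mid x)p(x)$, $q(y)=\sum_x K(y\mid x)q(x)$ be the induced marginals. Then $\fDiv{p(y)}{q(y)}\le\fDiv{p(x)}{q(x)}$. The proof is Jensen's inequality: write
\[
\frac{p(y)}{q(y)}=\sum_x \frac{K(y\mid x)q(x)}{q(y)}\cdot\frac{p(x)}{q(x)},
\]
which exhibits $p(y)/q(y)$ as a convex combination of the ratios $p(x)/q(x)$ with weights $w^y_x=K(y\mid x)q(x)/q(y)$ that sum to one over $x$. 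Applying convexity of $f$ inside $\sum_y q(y)f(\cdot)$, swapping the order of summation, and using $\sum_y K(y\mid x)=1$ collapses the bound to $\sum_x q(x)f(p(x)/q(x))$.

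\textbf{Step 2: The mixing-over-time channel.} Given a trajectory $\traj=\{\state_0,\action_1,\dots,\action_\hor,\state_\hor\}$, draw $t$ uniformly from $\{1,\dots,\hor\}$ and emit the pair $(\state_{t-1},\action_t)$; formally $K\bigl((\state,\action)\mid\traj\bigr)=\frac{1}{\hor}\sum_{t=1}^{\hor}\Ind(\state_{t-1}=\state)\Ind(\action_t=\action)$. This kernel does not reference the policy. I would then verify the pushforward identity: summing over trajectories and using the Markov property of $\policy$ (the action at time $t$ depends only on $\state_{t-1}$), $\sum_\traj\distTraj{\policy}(\traj)\Ind(\state_{t-1}=\state)\Ind(\action_t=\action)=\distStateTime{\policy}{t-1}(\state)\policy(\action\mid\state)$, and averaging over $t$ reproduces $\distState{\policy}(\state)\policy(\action\mid\state)$ by the definition $\distState{\policy}(\state)=\frac{1}{\hor}\sum_{t=1}^{\hor}\distStateTime{\policy}{t-1}(\state)$. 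The identical computation with $\expert$ in place of $\policy$ shows $K$ sends $\distTraj{\expert}$ to $\distState{\expert}(\state)\expert(\action\mid\state)$.

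\textbf{Step 3: Conclude.} Apply Step 1 with this $K$, $p=\distTraj{\expert}$, $q=\distTraj{\policy}$; absolute continuity of $\distState{\expert}\expert$ w.r.t.\ $\distState{\policy}\policy$ needed for the right-hand side is inherited from that of $\distTraj{\expert}$ w.r.t.\ $\distTraj{\policy}$. I expect the only delicate point to be the pushforward identity in Step 2: one must use that the policy is Markov so that $\prob{\state_{t-1}=\state,\action_t=\action}=\distStateTime{\policy}{t-1}(\state)\policy(\action\mid\state)$, and one must check carefully that the averaging-over-time map is literally the same object for the learner and the expert, since that common-channel property is exactly what licenses the data-processing inequality.
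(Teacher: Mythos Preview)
Your proof is correct and follows essentially the same route as the paper: the paper's ``information loss'' Lemma (proved via a generalized log-sum inequality, i.e.\ Jensen) is exactly the data-processing inequality you invoke, and the paper's observation that $P(z\mid\tau)=Q(z\mid\tau)$ is your common channel $K\bigl((\state,\action)\mid\traj\bigr)=\frac{1}{\hor}\sum_{t}\Ind(\state_{t-1}=\state)\Ind(\action_t=\action)$. Your framing via an explicit mixing-over-time kernel and the standard data-processing inequality is somewhat cleaner, but the underlying argument is the same.
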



\section{Framework for Divergence Minimization}
\label{sec:approach}

The key problem is that we don't know the expert policy $\expert$ and only get to observe it. Hence we are unable to compute the divergence exactly and must instead \emph{estimate} it based on \emph{sample} demonstrations. We build an estimator which lower bounds the state-action, and thus, trajectory divergence. The learner then minimizes the estimate.

\subsection{Variational approximation of divergence}
Let's say we want to measure the $f$-divergence between two distributions $p(x)$ and $q(x)$. Assume they are unknown but we have i.i.d samples, i.e., $x\sim p(x)$  and $x\sim q(x)$. Can we use these to estimate the divergence? \cite{nguyen2010estimating} show that we can indeed estimate it by expressing $f(\cdot)$ in it's \emph{variational form}, i.e. $f(u) = \sup_{t \in \mathrm{dom}_{f^*}} \left( tu - f^*(t) \right)$, where $f^*(\cdot)$ is the convex conjugate~\footnote{For a convex function $f(\cdot)$, the convex conjugate is $f^*(v) = \sup_{u \in \mathrm{dom}_f} \left( uv - f(u) \right)$. Also $(f^*)^* = f$.} Plugging this in the expression for $f$-divergence~\eref{eq:prob:fdiv_def} we have
\begin{equation}
\begin{aligned}
\label{eq:prob:fdiv_lb}
  \fDiv{p}{q} &= \sum_x q(x) f \left( \frac{p(x)}{q(x)} \right) = \sum_x q(x) \sup_{t \in \mathrm{dom}_{f^*}} \left( t\frac{p(x)}{q(x)} - f^*(t) \right) \\
  & \geq \sup_{\phi \in \Phi} \sum_x q(x) \left( \phi(x) \frac{p(x)}{q(x)} - f^*(\phi(x)) \right) \\
  & \geq \sup_{\phi \in \Phi} \left( \underbrace{\expect{x \sim p(x)}{\phi(x)}}_{\text{\tiny sample estimate}} - \underbrace{\expect{x \sim q(x)}{ f^* ( \phi(x) )}}_{\text{\tiny sample estimate}} \right) \\
\end{aligned}
\end{equation}
Here $\phi: X \rightarrow \dom{f^*}$ is a function approximator which we refer to as an \emph{estimator}. The lower bound is both due to Jensen's inequality  and the restriction to an estimator class $\Phi$. Intuitively, we convert divergence estimation to a discriminative classification problem between two sample sets. 

How should we choose estimator class $\Phi$? We can find the optimal estimator $\phi^*$ by taking the variation of the lower bound \eref{eq:prob:fdiv_lb} to get $\phi^*(x) = f'\left( \frac{p(x)}{q(x)} \right)$. Hence $\Phi$ should be flexible enough to approximate the subdifferential $f'(.)$ \emph{everywhere}. Can we use neural networks discriminators~\cite{goodfellow2014generative} as our class $\Phi$? \cite{nowozin2016f} show that to satisfy the range constraints, we can parameterize $\phi(x) = g_f(V_w(x))$ where $V_w : X \rightarrow \real$ is an unconstrained discriminator and $g_f: \real \rightarrow \dom{f^*}$ is an \emph{activation function}. We plug this in \eref{eq:prob:fdiv_lb} and the result in \eref{eq:prob:sa_fdiv_il_min} to arrive at the following problem.

\begin{algorithm}[t]
\caption{\fIL{f} \label{alg:fgail}}
\begin{algorithmic}[1]
\State Sample trajectories from expert $\tau^* \sim \distTraj{\expert}$
\State Initialize learner and estimator parameters $\theta_0$, $w_0$
\For{$i=0$ \textbf{to} $N-1$}
\State Sample trajectories from learner $\tau_i \sim \distTraj{\policy_{\theta_i}}$
\State Update estimator
\Statex \hspace{3ex} $w_{i+1} \leftarrow w_{i} + \eta_w \nabla_w \left( \sum_{ \pair{\state}{\action} \in \tau^* } g_f(V_w(\state, \action)) - \sum_{\pair{\state}{\action} \in \tau_i } f^*(g_f(V_w(\state, \action))) \right)$
\State Apply policy gradient
\Statex \hspace{3ex} $\theta_{i+1} \gets \theta_i - \eta_\theta \sum_{ \pair{\state}{\action} \sim \tau_i } \nabla_\theta \; \log \policy_\theta (\action | \state) Q^{f^*(g_f(V_w))}(\state, \action) $
\Statex \hspace{3ex} where $Q^{f^*(g_f(V_w))}(\state_{t-1}, \action_t) = - \sum\limits_{i=t}^T f^*(g_f(V_w(\state_{i-1}, \action_i)))$
\EndFor
\State \textbf{Return} $\policy_{\theta_N}$ 
\end{algorithmic}
\end{algorithm}

\begin{problem}[Variational Imitation (VIM)]
\label{prob:fil} 
Given a divergence $f(\cdot)$, compute a learner $\policy$ and discriminator $V_w$ as the saddle point of the following optimization
\begin{equation}
\label{eq:fil}
\policyLearn = \argminprob{\policy \in \policyClass} \max_{w} \; \expect{ \pair{\state}{\action} \sim \distTraj{\expert} }{g_f(V_w(\state, \action))} - \expect{ \pair{\state}{\action} \sim \distTraj{\policy} }{ f^*(g_f(V_w(\state, \action)))} 
\end{equation}
where $ \pair{\state}{\action} \sim \distTraj{\expert}$ are sample expert demonstrations, $\pair{\state}{\action} \sim \distTraj{\policy}$ are samples learner rollouts.
\end{problem}
We propose the algorithmic framework \fIL{f} (\algoref{alg:fgail}) which solves \eref{eq:fil} iteratively by updating estimator $V_w$ via supervised learning and learner $\theta_i$ via policy gradients. \algoref{alg:fgail} is a meta-algorithm. Plugging in different $f$-divergences (\tabref{table:fdiv}), we have different algorithms
\begin{enumerate}
	\item \fIL{KL}: Minimizing forward KL divergence
	\begin{equation}
	\label{eq:klvim}
\policyLearn = \argminprob{\policy \in \policyClass} \max_{w} \; \expect{ \pair{\state}{\action} \sim \distTraj{\expert} }{V_w(\state, \action)} - \expect{ \pair{\state}{\action} \sim \distTraj{\policy} }{ \exp(V_w(\state, \action)-1)} 
\end{equation}
	\item \fIL{RKL}: Minimizing reverse KL divergence (removing constant factors)
	\begin{equation}
		\label{eq:rklvim}
\policyLearn = \argminprob{\policy \in \policyClass} \max_{w} \; \expect{ \pair{\state}{\action} \sim \distTraj{\expert} }{-\exp(-V_w(\state, \action))} + \expect{ \pair{\state}{\action} \sim \distTraj{\policy} }{- V_w(\state, \action)} 
\end{equation}
	\item \fIL{JS}: Minimizing Jensen-Shannon divergence
	\begin{equation}
		\label{eq:jsvim}
\policyLearn = \argminprob{\policy \in \policyClass} \max_{w} \; \expect{ \pair{\state}{\action} \sim \distTraj{\expert} }{\log D_w(\state, \action)} - \expect{ \pair{\state}{\action} \sim \distTraj{\policy} }{ \log (1 - D_w(\state, \action))} 
\end{equation}
where $D_w(\state, \action) = (1 + \exp(-V_w(\state,\action))^{-1}$.
\end{enumerate}

\subsection{Recovering existing imitation learning algorithms}
Various existing IL approaches can be recovered under our framework. We defer the readers to \appref{sec:existing_algorithms} for deductions and details.

\textbf{Behavior Cloning~\cite{Pomerleau88} -- Kullback-Leibler (KL) divergence.} We show that the policy minimizing the KL divergence of trajectory distribution can be $\policyLearn = -\mathbb{E}_{\state \sim \distState{\expert}, \action \sim \expert(\cdot |\state)} \log(\policy(\action | \state))$, which is equivalent to behavior cloning with a cross entropy loss for multi-class classification.

\textbf{Generative Adversarial Imitation Learning (GAIL)~\cite{ho2016generative} -- Jensen-Shannon (JS) divergence.}
We see that JS-VIM~\eref{eq:jsvim} is exactly the GAIL optimization (without the entropic regularizer).

\begin{table}[t]
\label{table:fdiv}
\scriptsize
\caption{List of $f$-Divergences used, conjugates, optimal estimators and activation function} \label{table:fdiv}
\begin{tabularx}{\textwidth}{lcccc}
\toprule
{\bf Divergence}
& $f(u)$
& $f^*(t)$
& $\phi^*(x)$
& $g_f(v)$ \\
\midrule
Kullback-Leibler 
& $u\log u$ 
& $\exp (t-1)$ 
& $1 + \log \frac{p(x)}{q(x)}$ 
& $v$ \\
Reverse KL 
& $-\log u$ 
& $-1 - \log(-t)$ 
& $-\frac{q(x)}{p(x)}$ 
& $-\exp(v)$ \\
Jensen-Shannon
& \begin{tabular}{@{}c@{}}$-(u+1)\log \frac{1+u}{2} +$  \\ $ u \log u$\end{tabular} 
& $- \log(2 - \exp (t))$ 
& $\log \frac{2p(x)}{p(x) + q(x)}$ 
&  \begin{tabular}{@{}c@{}}$-\log(1+\exp(-v)) +$ \\ $\log(2)$\end{tabular}\\
Total Variation
& $\frac{1}{2} |u-1| $
& $t$
& $\frac{1}{2} \text{sign}(\frac{p(x)}{q(x)} - 1)$ 
& $\frac{1}{2} \text{tanh}(v)$ \\
\bottomrule
\end{tabularx}
\end{table}

\textbf{Dataset Aggregation (\Dagger)~\cite{ross2011reduction} -- Total Variation (TV) distance.} Using Pinsker's inequality and the fact that TV is a \emph{distance metric}, we have the following upper bound on TV
\begin{equation*}
\begin{aligned}
	\tvDiv{ \distTraj{\expert}(\traj) }{ \distTraj{\policy}(\traj) } & \leq \hor \expect{\state \sim \distState{\policy}(\state)}{ \tvDiv{\expert(\action|\state)}{\policy(\action|\state)} } \\
	& \leq \hor \sqrt{ \expect{\state \sim \distState{\policy}(\state)}{ \klDiv{\expert(\action|\state)}{\policy(\action|\state)} } }
\end{aligned}
\end{equation*}
\Dagger solves this non i.i.d problem in an iterative supervised learning manner with an interactive expert. Counter-examples to \Dagger~\cite{laskey2017comparing} can now be explained as an artifact of this divergence.

\subsection{Alternate techniques for Reverse KL minimization via interactive learning}
We highlight the Reverse KL divergence which has received relatively less attention in IL literature. \fIL{RKL}~\eref{eq:rklvim} has some shortcomings. First, it's a double lower bound approximation due to \thmref{thm:state_action_distribution} and Equation \eref{eq:prob:fdiv_lb}. Secondly, the optimal estimator is a state-action density ratio which maybe quite complex (\tabref{table:fdiv}). Finally, the optimization \eref{eq:fil} may be slow to converge.

However, assuming access to an \emph{interactive expert}, i.e.we can query an interactive expert for any $\expert(\action | \state)$, we can exploit Reverse KL divergence:

\begin{equation}
\begin{aligned}
\rklDiv{\distTraj{\expert}(\traj)}{\distTraj{\policy}(\traj)} &= T\mathbb{E}_{s\sim \rho_{\pi}} [D_{RKL}(\pi^*(\cdot|s), \pi(\cdot|s))]\\
&= \hor \; \expect{\state \sim \distState{\policy}}{ \sum_\action \policy(\action | \state) \log \frac{\pi(\action | \state)}{\expert(\action | \state)}} \notag
\end{aligned}
\end{equation}

Hence we can directly minimize action distribution divergence. Since this is on states induced by $\policy$, this falls under the regime of \emph{interactive learning}~\cite{ross2011reduction} where we query the expert on \emph{states visited by the learner}. We explore two different interactive learning techinques for I-projection, deferring to \appref{sec:rkl_action_distribution} and \appref{app:i_RKL} for details.

\textbf{Variational action divergence minimization.} 
Apply the \fIL{RKL} but on \emph{action divergence}:
\begin{equation}
\begin{aligned}
\policyLearn &= \argminprob{\policy \in \policyClass} \; \expect{\state \sim \distState{\policy}} { \expect{ \action \sim \expert(. | \state) }{- \exp(V_w(\state, \action))} + \expect{  \action \sim \policy(. | \state) }{ V_w(\state, \action)} } \\
\end{aligned}
\end{equation}
Unlike \fIL{RKL}, we collect a fresh batch of data from \emph{both} an interactive expert and learner every iteration. We show that this estimator is far easier to approximate than \fIL{RKL}(\appref{sec:rkl_action_distribution}).

\textbf{Density ratio minimization via no regret online learning.} We first upper bound the action divergence:
\begin{equation}
\begin{aligned}
\label{eq:dre_opt}
  \rklDiv{\distTraj{\expert}(\traj)}{\distTraj{\policy}(\traj)} &= T\expect{\state \sim \distState{\policy}}{ \expect{\action \sim \policy(.|\state)}{ \log \frac{\policy(\action | \state)}{\expert(\action | \state)}}} \\
  & \leq T\expect{\state \sim \distState{\policy}}{ \expect{\action \sim \policy(.|\state)}{  \frac{ \policy(\action | \state) }{ \expert(\action | \state) } - 1 }} \notag\\
\end{aligned}
\end{equation}
Given a batch of data from an interactive expert and the learner, we invoke an off-shelf density ratio estimator (DRE)~\cite{kanamori2012statistical} to get $\hat{r}(s,a) \approx \frac{\rho_{\pi}(s)\pi(a|s)}{\rho_{\pi}(s)\pi^*(a|s)} = \frac{\pi(a|s)}{\pi^*(a|s)}$. Since the optimization is a non i.i.d learning problem, we solve it by dataset aggregation. Note this \emph{does not require invoking policy gradients}. In fact, if we choose an expressive enough policy class, this method gives us a global performance guarantee which neither GAIL or any \fIL{f} provides (\appref{app:i_RKL}).

\section{Multi-modal Trajectory Demonstrations}
\label{sec:divergence}

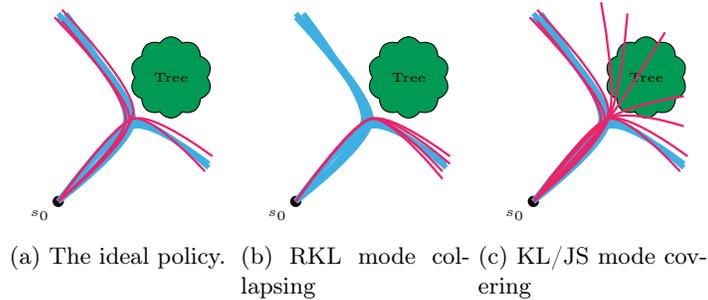
\begin{figure}[t]
\centering
\begin{subfigure}[t]{0.25\textwidth}
\centering
\begin{tikzpicture}[scale=0.5]
    \DrawEnv
    \DrawExpert
    \DrawPolicyLeft
    \DrawPolicyRight{0}
\end{tikzpicture}
\caption{The ideal policy.}
\label{fig:cartoon-ideal}
\end{subfigure}
\begin{subfigure}[t]{0.25\textwidth}
\centering
\begin{tikzpicture}[scale=0.5]
    \DrawEnv
    \DrawExpert
    \DrawPolicyRight{0}
    \DrawPolicyRight{0.2}
\end{tikzpicture}
\caption{RKL mode collapsing} 
\label{fig:cartoon-collapse}
\end{subfigure}
\begin{subfigure}[t]{0.25\textwidth}
\centering
\begin{tikzpicture}[scale=0.5]
    \DrawEnv
    \DrawExpert
    \DrawPolicyRight{0}
    \DrawPolicyLeft
    \DrawPolicyCenter
\end{tikzpicture}
\caption{KL/JS mode covering}
\label{fig:cartoon-cover}
\end{subfigure}
\caption{Illustration of the safety concerns of mode-covering behavior. (a) Expert demonstrations and policy roll-outs are shown in blue and red, respectively. (b) RKL receives only a small penalty for the safe behavior whereas KL receives an infinite penalty. (c) The opposite is true for the unsafe behavior where learner crashes.\fullFigGap}
\label{fig:divergences-cartoon}
\end{figure}


We now examine multi-modal expert demonstrations. Consider the demonstrations in \figref{fig:divergences-cartoon} which avoid colliding with a tree by turning left or right with equal probability.
Depending on the policy class, it may be impossible to achieve zero divergence for \emph{any} choice of $f$-divergence (\figref{fig:cartoon-ideal}), e.g., $\policyClass$ is Gaussian with fixed variance.  Then the question becomes, if the globally optimal policy in our policy class achieves non-zero divergence, how should we design our objective to fail elegantly and safely? In this example, one can imagine two reasonable choices: (1) replicate one of the modes (mode-collapsing) or (2) cover both the modes plus the region between (mode-covering). We argue that in some imitation learning tasks when the dominant mode is desirable, paradigm (1) is preferable.
\vspace{-1em}
\paragraph{Mode-covering in KL.} This divergence exhibits strong mode-covering tendencies as in \cref{fig:cartoon-cover}. Examining the definition of the KL divergence, we see that there is a significant penalty for failing to completely support the demonstration distribution, but no explicit penalty for generating outlier samples. In fact, if $\exists s, a \textup{ s.t. }  \distState{\expert}(s, a) > 0, \distState{\policy}(s, a) = 0$, then the divergence is infinite. However, the opposite does not hold. Thus, the \fIL{KL} optimal policy in $\policyClass$ belongs to the second behavior class  -- which the agent to frequently crash into the tree.
\vspace{-1em}
\paragraph{Mode-collapsing in RKL.} At the other end of the multi-modal behavior spectrum lies the RKL divergence, which exhibits strong mode-seeking behavior as in \cref{fig:cartoon-collapse}, due to switching the expectation over $\distState{\policy}$ with $\distState{\expert}$. Note there is no explicit penalty for failing to entirely cover $\distState{\expert}$, but an arbitrarily large penalty for generating samples which would are improbable under the demonstrator distribution. This results in always turning left or always turning right around the tree, depending on the initialization and mode mixture. For many tasks, failing in such a manner is predictable and safe, as we have already seen similar trajectories from the demonstrator.
\vspace{-1em}
\paragraph{Jensen-Shannon.} This divergence may fall into either behavior class, depending on the MDP, the demonstrations, and the optimization initialization. Examining the definition, we see the divergence is symmetric and expectations are taken over both $\distState{\policy}$ and $\distState{\expert}$. Thus, if either distribution is unsupported (i.e.\ $\exists s, a \textup{ s.t. }  \distState{\expert}(s, a) > 0, \distState{\policy}(s, a) = 0$ or vice versa) the divergence remains finite. Later, we empirically show that although it is possible to achieve safe mode-collapse with JS on some tasks, this is not always the case.


\section{Experiments}
\label{sec:experiments}

\subsection{Low dimensional tasks}

In this section, we empirically validate the following \textbf{Hypotheses}:
\begin{itemize}[itemsep=.3pt,topsep=.1pt]
\item[\textbf{H1}] \textit{The globally optimal policy for RKL imitates a subset of the demonstrator modes, whereas JS and KL tend to interpolate between them.}
\item[\textbf{H2}] \textit{The sample-based estimator for KL and JS underestimates the divergence more than RKL.} 
\item[\textbf{H3}] \textit{The policy gradient optimization landscape for KL and JS with continuously parameterized policies is more susceptible to local minima, compared to RKL.}
\end{itemize}

We test these hypothesis on two environments. The \textbf{Bandit environment} has a single state and three actions, $a$, $b$ and $c$. The expert chooses $a$ and $b$ with equal probability as in \figref{fig:bandit_expert}. We choose a policy class $\Pi$ which has $3$ policies $A$, $B$, and $M$. $A$ selects $a$, $B$ selects $b$ and $M$ stochastically selects $a$, $b$, or $c$ with probability $(\epsilon_0,\epsilon_0,1-2\epsilon_0)$. The \textbf{GridWorld environment} has a $3\times3$ states (\figref{fig:gridworld}). There are a start (\texttt{S}) and a terminal (\texttt{T}) state. The center state is undesirable. The environment has control noise $\epsilon_1$ and transition noise $\epsilon_2$. \figref{fig:grid_expert_s} shows the expert's multi-modal demonstration. The policy class $\Pi$ allows agents to go \textit{up, right, down, left} at each state.

\begin{figure}[!htbp]
\vspace{-.7em}
        \begin{subfigure}[b]{0.32\textwidth}
        		\centering
                \includegraphics[width=0.7\linewidth]{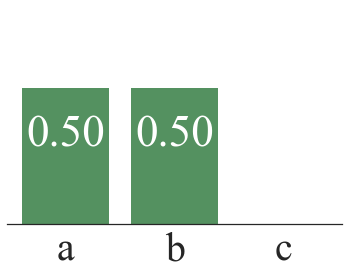}
                \caption{Bandit expert policy}
                \label{fig:bandit_expert}
        \end{subfigure}%
        \begin{subfigure}[b]{0.2\textwidth}
        		\centering
                \includegraphics[height=.8\linewidth]{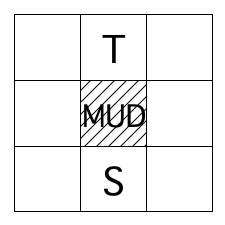}
                \caption{Gridworld}
                \label{fig:gridworld}
        \end{subfigure}%
        \begin{subfigure}[b]{0.2\textwidth}
        		\centering
                \includegraphics[height=.8\linewidth]{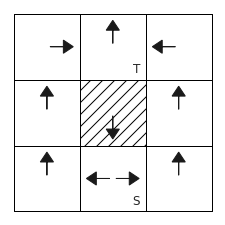}
                \caption{Expert policy}
                \label{fig:grid_expert}
        \end{subfigure}%
        \begin{subfigure}[b]{0.2\textwidth}
        		\centering
                \includegraphics[height=.801\linewidth]{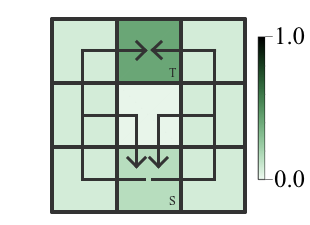}
                \caption{Rollouts}
                \label{fig:grid_expert_s}
        \end{subfigure}
        \caption{Bandit and gridworld environment.}
\end{figure}
\vspace{-2em}
\paragraph{Policy enumeration}
To test \textbf{H1}, we enumerate through all policies in $\Pi$, exactly compute their stationary distributions $\distState{\policy}(\state,\action)$, and select the policy with the smallest exact $f$-divergence, the optimal policy. Our results on the bandit and gridworld ( Table \ref{fig:bandit-a} and \ref{fig:bandit-b}) show that the globally optimal solution to the RKL objective successfully collapses to a single mode (e.g.\ A and Right, respectively), whereas KL and JS interpolate between the modes (i.e.\ M and Up, respectively).


\begin{table}[t]
\setlength{\belowcaptionskip}{0pt}
\setlength{\intextsep}{0pt}
\setlength{\floatsep}{0pt}
\setlength{\abovecaptionskip}{0pt}
\setlength\aboverulesep{0pt}
\caption{Globally optimal policies produced by policy enumeration (\ref{fig:bandit-a} and \ref{fig:bandit-b}), and locally optimal policies produced by policy gradient (\ref{fig:bandit-c} and \ref{fig:bandit-d}). In all cases, the RKL policy tends to collapse to one of the demonstrator modes, whereas the other policies interpolate between the modes, resulting in unsafe behavior.}
\label{tab:grid_pi}
\centering
\begin{tabularx}{\textwidth}
{
    >{\centering\arraybackslash}m{0.33in}
    >{\centering\arraybackslash}m{0.68in}
    >{\centering\arraybackslash}m{0.68in}
    >{\centering\arraybackslash}m{0.68in}@{\hskip .1in}
    >{\centering\arraybackslash}m{0.68in}
    >{\centering\arraybackslash}m{0.68in}
    >{\centering\arraybackslash}m{0.68in}
} \toprule
& \multicolumn{3}{c}{\textbf{Bandit}} & \multicolumn{3}{c}{\textbf{GridWorld}} \\[0.1cm] 
\cmidrule(lr){2-4} \cmidrule(lr){5-7} & RKL & JS & KL & RKL & JS & KL\\
\cmidrule(lr){1-4} \cmidrule(lr){5-7} 
\textbf{H1 }{\tiny global-optima} &
  \multicolumn{3}{l}{
    \begin{subfigure}[t]{.45\textwidth}
		\centering
		\includegraphics[width=0.6in]{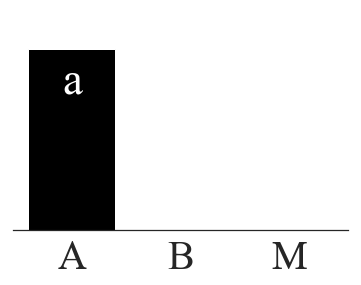}\quad
		\includegraphics[width=0.6in]{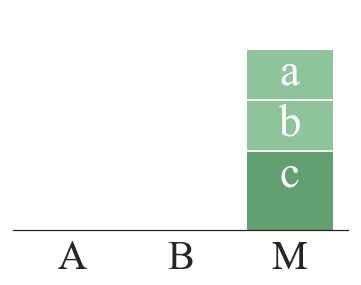}\quad
		\includegraphics[width=0.6in]{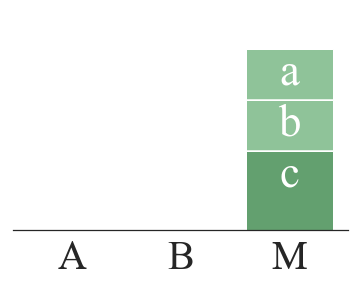}\quad
		\caption{}
		\label{fig:bandit-a} 
	\end{subfigure}
	} & \multicolumn{3}{l}{
	\begin{subfigure}[t]{.45\textwidth}
		\centering
		\includegraphics[width=0.6in]{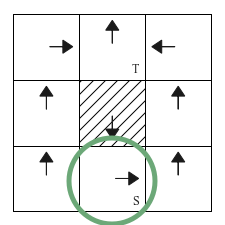}\quad
		\includegraphics[width=0.6in]{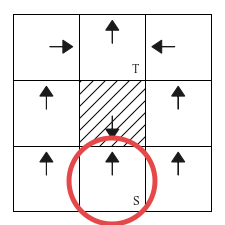}\quad
		\includegraphics[width=0.6in]{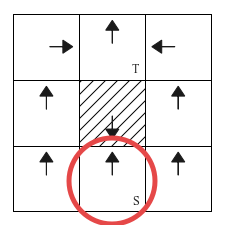}\quad
		\caption{}
		\label{fig:bandit-b}
	\end{subfigure}\quad
	} \\[0.1cm] 
    \textbf{H3 }{\tiny local-optima} &
\multicolumn{3}{l}{
	\begin{subfigure}[t]{.45\textwidth}
	\centering
	\includegraphics[width=0.6in]{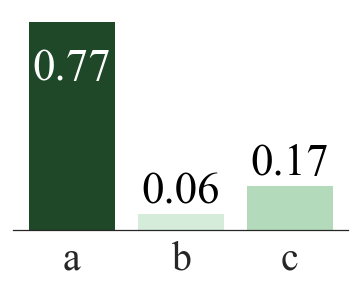}\quad
	\includegraphics[width=0.6in]{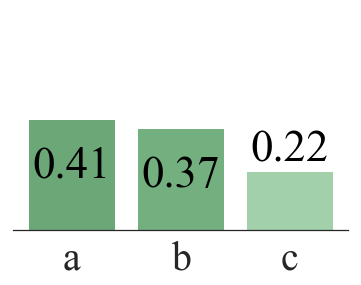}\quad
	\includegraphics[width=0.6in]{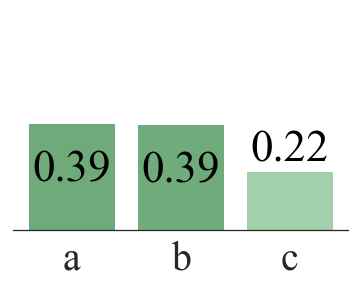}\quad
	\caption{}
	\label{fig:bandit-c}
	\end{subfigure}\quad
  } & \multicolumn{3}{l}{
	\begin{subfigure}[t]{.45\textwidth}
	\centering
	\includegraphics[width=0.6in]{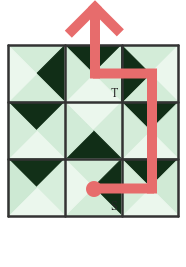}\quad
	\includegraphics[width=0.6in]{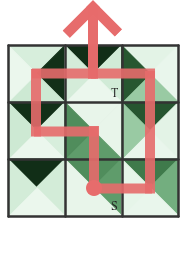}\quad
	\includegraphics[width=0.6in]{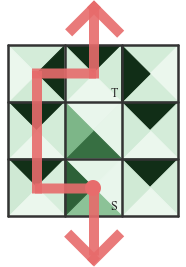}\quad
	\caption{}
	\label{fig:bandit-d}
	\end{subfigure}\quad
  } \\ \bottomrule\end{tabularx}
\label{tab:bandit_pi}
\end{table}

Whether the optimal policy is mode-covering or collapsing depends on the \emph{stochasticity in the policy}. In the bandit environment we parameterize this by $\epsilon_0$ and show in Fig~\ref{fig:control_noise} how the divergences and resulting optimal policy changes as a function of $\epsilon_0$. Note that RKL strongly prefers mode collapsing, KL strongly prefers mode covering, and JS is between the two other divergences.

\begin{figure}[!htbp]
\captionsetup[subfigure]{aboveskip=-1pt,belowskip=-1pt}
\begin{subfigure}{.32\linewidth}
\centering
\includegraphics[width=1.6in]{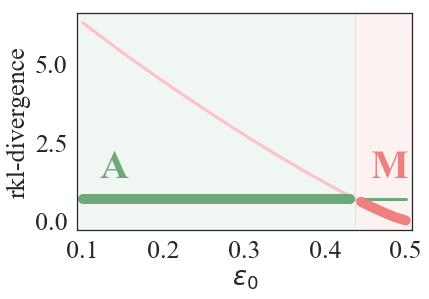}
\caption{RKL}
\label{fig:bandit-cn1}
\end{subfigure}%
\begin{subfigure}{.32\linewidth}
\centering
\includegraphics[width=1.6in]{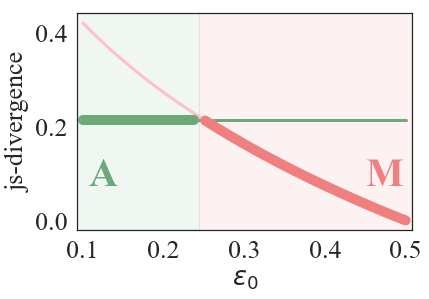}
\caption{JS}
\label{fig:bandit-cn2}
\end{subfigure}
\begin{subfigure}{.32\linewidth}
\centering
\includegraphics[width=1.6in]{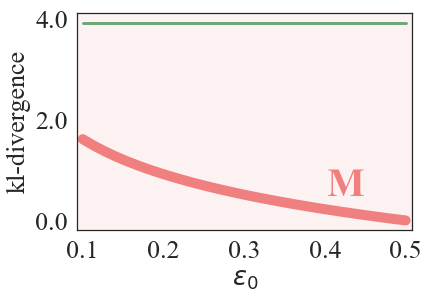}
\caption{KL}
\label{fig:bandit-cn3}
\end{subfigure}
\caption{Divergences and corresponding optimal policy as a function of the control noise $\epsilon_0$. RKL strongly prefers the mode collapse policy $A$ (except at high control noise), KL strongly prefers the mode covering policy $M$, and JS is between the two.}
\label{fig:control_noise}
\end{figure}

\vspace{-.5em}
\paragraph{Divergence estimation}
To test \textbf{H2}, we compare the sample-based estimation of $f$-divergence to the true value in \cref{fig:true_versus_est_divergence}. We highlight the preferred policies under each objective (in the 1 percentile of estimations). For the highlighted group, the estimation is often much lower than the true divergence for KL and JS, perhaps due to the sampling issue discussed in Appendix \ref{app:dividezero}.

\vspace{-.5em}
\paragraph{Policy gradient optimization landscape}
To test \textbf{H3}, we solve for a local-optimal policy using policy gradient for \fIL{KL}, \fIL{RKL} and \fIL{JS}. Though the bandit problem and the gridworld environment have only discrete actions, we consider a continuously parameterized policy class (Appendix \ref{app:parameterize}) for use with policy gradient. Table \ref{fig:bandit-c} and \ref{fig:bandit-d} shows that RKL-VIM empirically produces policies that collapses to a single mode whereas JS and KL-VIM do not. 


\begin{figure}[!t]
\captionsetup[subfigure]{aboveskip=-1pt,belowskip=-1pt}
\begin{subfigure}{.32\linewidth}
\centering
\includegraphics[width=1.2in]{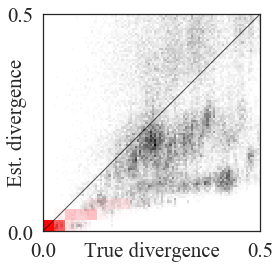}
\caption{RKL}
\label{fig:bandit-cn1}
\end{subfigure}%
\begin{subfigure}{.32\linewidth}
\centering
\includegraphics[width=1.2in]{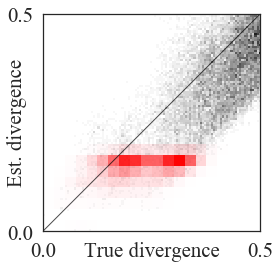}
\caption{JS}
\label{fig:bandit-cn2}
\end{subfigure}
\begin{subfigure}{.32\linewidth}
\centering
\includegraphics[width=1.2in]{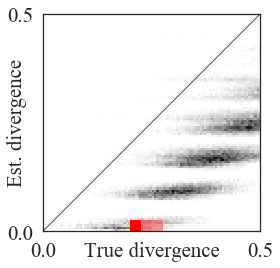}
\caption{KL}
\label{fig:bandit-cn3}
\end{subfigure}
\caption{Comparing $f$-divergence with the estimated values. 
Preferred policies under each objective (in the 1 percentile of estimations) are in red. The normalized estimations appear to be typically lower than the normalized true values for JS and KL.}\label{fig:true_versus_est_divergence}
\end{figure}

\subsection{High dimensional continuous control task}

\begin{figure}[!t]
 \centering
 \includegraphics[width=0.95\linewidth]{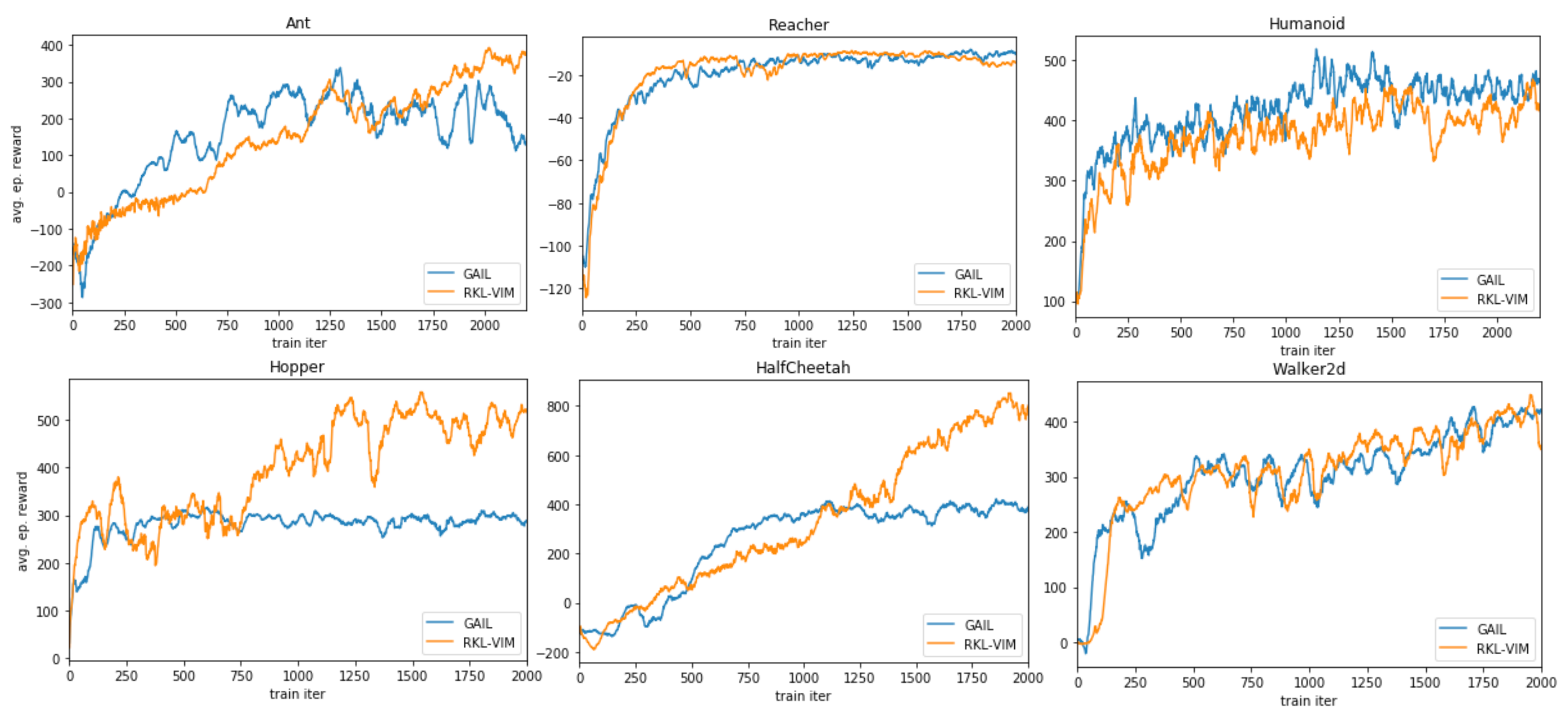}
 \caption{Training \fIL{RKL} and \fIL{JS}(GAIL) on Mujoco environments.}
\label{fig:mujoco}
\end{figure}

We tested \fIL{RKL} and \fIL{JS} (GAIL) on a set of high dimensional control tasks in Mujoco. 
Though our main interest is in multi-modal behavior which occurs frequently in human demonstrations,
here we had to generate expert demonstrations using a reinforcement learning policy, which are \emph{single modal}.

The vanilla version of these algorithms were significantly slow to maximize the cumulative reward. Further examination revealed that there were multiple saddle points that the system gets `stuck' in. A reliable way to coax the algorithm to the desired saddle point was to mix in a small percentage of the true reward along with the discriminator loss. Hence, we augmented the generator loss $\expect{ \pair{\state}{\action} \sim \distTraj{\policy} }{ (1-\alpha) -f^*(g_f(V_w(\state, \action))) + \alpha r(\state, \action)}$ where $\alpha=0.2$. This resulted in reliable, albeit different, convergence from both algorithms.  

\figref{fig:mujoco} shows the average episodic reward over training iterations. On Humanoid, Reacher and Walker2d the performance of both algorithms are similar. However on Ant, Hopper and HalfCheetah \fIL{RKL} converges to a higher value. Further inspection of the discriminator loss reveals that \fIL{RKL} heavily upweights states that the expert visits over the states that the learner visits. While this makes convergence slightly more sluggish (e.g. Ant), the algorithm terminates with a higher reward.  


\section{Discussion}
\label{sec:discussion}
We presented an imitation learning framework based on $f$-divergences, which generalizes existing approaches including behavior cloning (KL), GAIL (JS), and \Dagger (TV). In settings with multi-modal demonstrations, we showed that RKL divergence safely and efficiently collapses to a subset of the modes, whereas KL and JS often produce unsafe behavior. 

Our framework minimizes an \emph{approximate estimation} of the divergence, notably a lower bound \eref{eq:prob:fdiv_lb}. KL divergence is the only one we can actually measure (Appendix~\ref{sec:existing_algorithms}). The lower bound \eref{eq:prob:fdiv_lb} is tight if the function approximator $\phi(x)$ has enough capacity to express the function $f'(\frac{p(x)}{q(x)})$. For Reverse KL, $f(u) = - \log u$ and  $f'(u) = - \frac{1}{u}$. Hence $f'(.)$ can be unbounded and we may need exponentially large number of samples to correctly estimate $\phi(x)$. On the other hand, deriving a \emph{tight upper bound} on the $f$-divergence from a finite set of samples is also impossible. e.g. For RKL, without any assumptions about the expert or learner distribution, there is no way to estimate the support accurately given a finite number of samples. Hence we are left only with the choice of $\infty$ which is vacuous. 

There are a few practical remedies that center around a key observation --  we care not about measuring the divergence but rather minimizing it. One way to do so is to consider a \emph{noisy} version of divergence minimization as in \cite{zhang2019variational}, essentially adding Gaussian noise to both learner and expert to ensure both distributions are absolutely continuous. This upper bounds the magnitude of the divergence. We can think of this as smoothing out the cost function that the policy chooses to minimize. This would help in faster convergence.

We can take these intuitions further and view imitation learning as computing a really good loss - a balance between a loss that maximizes likelihood of expert actions (KL divergence) and a loss that penalizes the learner from visiting states that the expert does not visit. Instead of using estimating the latter term, we can potentially exploit side information. For example, we may already know that the expert does not like to violate obstacle constraints (a fact that we can test from the data). This can then be simply added in as an auxiliary penalty term.

There are a couple interesting directions for future work. One is to unify this framework with maximum entropy moment matching. Given a set of basis function $\phi(x)$, MaxEnt solves for a maximum entropy distribution $q(x)$ such that the moments of the basis functions are matched $\expect{x \sim p(x)}{\phi(x)} = \expect{x \sim q(x)}{\phi(x)}$. Contrast this to \eref{eq:prob:fdiv_lb} where moments of a transformed function are matched. Consequently, MaxEnt \emph{symmetrically} bumps down cost of expert states and bumps up the cost of learner states. In contrast, RKL-VIM \eref{eq:rklvim} \emph{exponentially} bumps down cost of expert and \emph{linearly} bumps up the cost of learner states.

Another interesting direction would be to consider the class of integral probability metrics (IPM). IPMs are metrics that take the form $\sup_{\phi \in \Phi} \expect{x \sim p(x)}{\phi(x)} - \expect{x \sim q(x)}{\phi(x)}$. Unlike f-divergence estimators, these metrics are measurable by definition. Choosing different families of $\Phi$ results in MMD, TotalVariation, Earth-movers distance. Preliminary results using such estimators seem promising~\cite{sun2019provably}. 


\paragraph{Acknowledgements} This work was (partially) funded by the National Institute of Health R01 (\#R01EB019335), National Science Foundation CPS (\#1544797), National Science Foundation NRI (\#1637748), the Office of Naval Research, the RCTA, Amazon, and Honda Research Institute USA.


\renewcommand\bibpreamble{\vspace{-3\baselineskip}}

\bibliographystyle{unsrtnat}
{\small
\bibliography{reference}
}
\clearpage
\appendix
\onecolumn
\begin{center}
    \LARGE{Appendix for ``Imitation Learning \\ as f-Divergence Minimization''}
\end{center}


\section{Lower Bounding $f$-Divergence of Trajectory Distribution with State Action Distribution}
\label{sec:state_action_distribution}

We begin with a lemma that relates $f$-divergence between two vectors and their sum.
\begin{lemma}[Generalized log sum inequality]
\label{lem:generalized_log_sum_inequality}
Let $p_1, \dots, p_n$ and $q_1, \dots, q_n$ be non-negative numbers. Let $p = \sum_{i=1}^n p_i$ and $q = \sum_{i=1}^n q_i$. Let $f(.)$ be a convex function. We have the following:
\begin{equation}
	\sum_{i=1}^n  q_i f \left( \frac{p_i}{q_i} \right) \geq q f \left( \frac{p}{q} \vphantom{\frac{p_i}{q_i}} \right)
\end{equation}
\end{lemma}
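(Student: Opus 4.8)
The plan is to prove the generalized log sum inequality, Lemma~\ref{lem:generalized_log_sum_inequality}, as a direct consequence of the (finite) Jensen inequality applied to the convex function $f$. The key observation is that the quantities $q_i/q$ form a probability distribution over the index set $\{1,\dots,n\}$ (assuming $q > 0$), and that the ratio $p/q$ can be written as a convex combination of the ratios $p_i/q_i$ with exactly these weights.

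First I would dispose of the degenerate cases: if $q = 0$ then every $q_i = 0$, both sides are interpreted as $0$ (or the statement is vacuous under the usual conventions $0 f(0/0) = 0$), so there is nothing to prove; more generally one handles terms with $q_i = 0$ by the convention $q_i f(p_i/q_i) \coloneqq p_i \lim_{u\to\infty} f(u)/u$, which only makes the left-hand side larger, so we may assume all $q_i > 0$. Then define the weights $\lambda_i = q_i / q$, which satisfy $\lambda_i \geq 0$ and $\sum_{i=1}^n \lambda_i = 1$. The central algebraic identity is
\begin{equation*}
\sum_{i=1}^n \lambda_i \frac{p_i}{q_i} = \sum_{i=1}^n \frac{q_i}{q}\cdot\frac{p_i}{q_i} = \frac{1}{q}\sum_{i=1}^n p_i = \frac{p}{q}.
\end{equation*}
Applying Jensen's inequality to the convex function $f$ with these weights and the points $p_i/q_i$ gives
\begin{equation*}
\sum_{i=1}^n \lambda_i f\left(\frac{p_i}{q_i}\right) \geq f\left(\sum_{i=1}^n \lambda_i \frac{p_i}{q_i}\right) = f\left(\frac{p}{q}\right).
\end{equation*}
Multiplying through by $q > 0$ yields exactly $\sum_{i=1}^n q_i f(p_i/q_i) \geq q f(p/q)$, which is the claim.

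I do not expect any serious obstacle here; the only subtlety worth being careful about is the treatment of zero denominators $q_i = 0$ (and the matching convention needed so that the inequality still makes sense and points the right way), and ensuring $f$ is genuinely being used only on the relevant domain of nonnegative reals. Since the paper will later use this lemma with $p_i, q_i$ being trajectory/state-action probabilities that telescope into marginals, the finite form stated is exactly what is needed, and the standard ``log sum inequality'' is recovered by taking $f(u) = u\log u$. The proof is essentially one line of Jensen once the weights are identified, so the write-up is short.
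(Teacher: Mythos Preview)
Your proof is correct and follows essentially the same approach as the paper: factor out $q$, recognize $q_i/q$ as convex weights, apply Jensen's inequality to $f$, and simplify $\sum_i (q_i/q)(p_i/q_i)=p/q$. Your treatment of the degenerate cases $q=0$ and $q_i=0$ is more careful than the paper's, which simply omits them.
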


\begin{proof}
\begin{align}
\sum_{i=1}^n  q_i f \left( \frac{p_i}{q_i} \right) 	&= q \sum_{i=1}^n  \frac{q_i}{q} f \left( \frac{p_i}{q_i} \right) 
\geq q f \left( \sum_{i=1}^n  \frac{q_i}{q} \frac{p_i}{q_i} \right) \label{eq:log_sum:convex}
\geq q f \left( \frac{1}{q} \sum_{i=1}^n  p_i \vphantom{\frac{p_i}{q_i}} \right) 
\geq q f \left( \frac{p}{q} \vphantom{\frac{p_i}{q_i}} \right) 
\end{align}
where \eref{eq:log_sum:convex} is due to Jensen's inequality since $f(.)$ is convex and $q_i \geq 0$ and $\sum_{i=1}^n \frac{q_i}{q} = 1$. 
\end{proof}

We use \lemref{lem:generalized_log_sum_inequality} to prove a more general lemma that relates the f-divergence defined over two spaces where one of the space is rich enough in information to explain away the other.
\begin{lemma}[Information loss]
\label{lem:fdivergence_reduction}
Let $a$ and $b$ be two random variables. Let $P(a,b)$ be a joint probability distribution. The marginal distributions are $P(a) = \sum_b P(a,b)$ and $P(b) = \sum_a P(a,b)$. 
Assume that $a$ \textbf{contains all information of} $b$. This is expressed as follows -- given any two probability distribution $P(.)$, $Q(.)$, assume the following equality holds for all $a, b$:
\begin{equation}
\label{eq:constant_condition}
	P(b | a) = Q(b | a)
\end{equation}
Under these conditions, the following inequality holds:
\begin{equation}
	\sum_{a } Q(a) f \bigg( \frac{P(a)}{Q(a)} \bigg) \geq \sum_{b} Q(b) f \bigg( \frac{P(b)}{Q(b)} \bigg)
\end{equation}
\end{lemma}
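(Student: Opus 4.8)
The plan is to reduce Lemma~\ref{lem:fdivergence_reduction} to the generalized log sum inequality (Lemma~\ref{lem:generalized_log_sum_inequality}) by grouping the joint sum over $(a,b)$ according to the value of $b$. First I would write the joint $f$-divergence-style sum and split it using the chain rule $P(a,b) = P(a)P(b|a)$ and $Q(a,b) = Q(a)Q(b|a)$, so that
\begin{equation*}
\sum_{a,b} Q(a,b) f\!\left( \frac{P(a,b)}{Q(a,b)} \right) = \sum_{a,b} Q(a) Q(b|a) f\!\left( \frac{P(a) P(b|a)}{Q(a) Q(b|a)} \right).
\end{equation*}
Here the hypothesis \eref{eq:constant_condition}, namely $P(b|a) = Q(b|a)$ for all $a,b$, is crucial: it cancels the conditional factors inside $f$, leaving $f(P(a)/Q(a))$, so the inner expression no longer depends on $b$ and summing $Q(b|a)$ over $b$ gives $1$. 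This shows
\begin{equation*}
\sum_{a} Q(a) f\!\left( \frac{P(a)}{Q(a)} \right) = \sum_{a,b} Q(a,b) f\!\left( \frac{P(a,b)}{Q(a,b)} \right).
\end{equation*}

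Next I would show the right-hand side of the claimed inequality is itself a lower bound of this joint sum. Fix $b$ and apply Lemma~\ref{lem:generalized_log_sum_inequality} with the index $i$ ranging over values of $a$, setting $p_i = P(a,b)$ and $q_i = Q(a,b)$; then $\sum_i p_i = P(b)$ and $\sum_i q_i = Q(b)$, giving
\begin{equation*}
\sum_{a} Q(a,b) f\!\left( \frac{P(a,b)}{Q(a,b)} \right) \geq Q(b) f\!\left( \frac{P(b)}{Q(b)} \right).
\end{equation*}
Summing this over all $b$ and chaining with the equality from the previous paragraph yields exactly the desired inequality $\sum_a Q(a) f(P(a)/Q(a)) \geq \sum_b Q(b) f(P(b)/Q(b))$.

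I expect the main subtlety to be purely bookkeeping rather than conceptual: handling the degenerate terms where $Q(a,b) = 0$ (or $Q(b) = 0$, $Q(a) = 0$) so that the ratios and the convention $0 \cdot f(0/0) = 0$ are treated consistently, and making sure the absolute-continuity assumption (that $P \ll Q$, inherited from the $f$-divergence setup) rules out dividing by zero where $P$ is positive. Once one adopts the standard $f$-divergence conventions for these zero cases, both the cancellation step and the application of the log sum inequality go through termwise. The only genuine input is the conditional-equality hypothesis \eref{eq:constant_condition}, which is precisely what collapses the $b$-dependence inside $f$; everything else is Jensen's inequality repackaged via Lemma~\ref{lem:generalized_log_sum_inequality}.
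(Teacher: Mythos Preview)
Your proposal is correct and follows essentially the same argument as the paper: both use the hypothesis $P(b\mid a)=Q(b\mid a)$ to identify $\sum_a Q(a) f(P(a)/Q(a))$ with the joint sum $\sum_{a,b} Q(a,b) f(P(a,b)/Q(a,b))$, and then apply Lemma~\ref{lem:generalized_log_sum_inequality} over $a$ for each fixed $b$ to reach $\sum_b Q(b) f(P(b)/Q(b))$. The only cosmetic difference is that the paper starts from the $a$-marginal and expands to the joint, whereas you start from the joint and collapse to the $a$-marginal.
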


\begin{proof}

\begin{align}
\sum_{a } Q(a) f \bigg( \frac{P(a)}{Q(a)} \bigg)	&= \sum_{a}\bigg(\sum_{b} Q(a,b)\bigg) f\bigg( \frac{P(a)}{Q(a)} \bigg)\\
&= \sum_{a} \sum_{b} Q(a,b) f\bigg( \frac{P(a)}{Q(a)} \bigg) \\
&= \sum_{b} \sum_{a} Q(a,b) f\bigg( \frac{ P(a,b) / P(b | a)} { Q(a,b) / Q(b | a)} \bigg)\label{eq:joint_formula} \\
&= \sum_{b} \sum_{a} Q(a,b) f\bigg( \frac{ P(a,b)} { Q(a,b)} \bigg) \label{eq:apply_constant_condition}\\
&\geq \sum_{b} \bigg( \sum_{a} Q(a,b) \bigg) f\left( \frac{ \bigg( \sum_{a} P(a,b) \bigg)} { \bigg( \sum_{a} Q(a,b) \bigg) } \right) \label{eq:apply_log_sum} \\
&\geq \sum_{b} Q(b) f \bigg( \frac{P(b)}{Q(b)} \bigg)
\end{align}

We get \eref{eq:joint_formula} by applying $P(a,b) = P(a)P(b | a)$ and $Q(a,b) = Q(a)Q(b | a)$. We get \eref{eq:apply_constant_condition} applying the equality constraint from \eref{eq:constant_condition}. We get \eref{eq:apply_log_sum} from \lemref{lem:generalized_log_sum_inequality} by setting $p_i = P(a,b), q_i = Q(a,b)$ and summing over all $a$ keeping $b$ fixed. 
\end{proof}

We are now ready to prove \thmref{thm:state_action_distribution} using \lemref{lem:fdivergence_reduction}.

\begin{proof}[Proof of \thmref{thm:state_action_distribution}]

Let random variable $a$ belong to the space of trajectories $\traj$. Let random variable $b$ belong to the space of state action pairs $z = (\state, \action)$. Note that for any joint distribution $P(z, \traj)$ and $Q(z, \traj)$, the following is true
\begin{equation}
	P(z|\traj) = Q(z| \traj)
\end{equation}
This is because a trajectory $\traj$ contains all information about $z$, i.e. $\traj = \{s_0, a_1, s_1, \dots \}$. Upon applying \lemref{lem:fdivergence_reduction} we have the inequality
\begin{equation}
\sum_{\tau } Q(\tau) f\frac{P(\tau)}{Q(\tau)} \geq \sum_{z} Q(z) f\frac{P(z)}{Q(z)} = \sum_{(s,a)} \rho_\pi(s,a) f\big(\frac{\rho_{\pi^*}(s,a)}{\rho_\pi(s,a)}\big)
\end{equation}
\end{proof}

The bound is reasonable as it merely states that information gets lost when temporal information is discarded. Note that the theorem also extends to state distributions, i.e.
\begin{corollary}
Divergence between trajectory distribution is lower bounded by state distribution.
\begin{equation*}
\fDiv{\distTraj{\expert}(\traj)}{\distTraj{\policy}(\traj)} \geq \fDiv{\distTraj{\expert}(\state)}{\distTraj{\policy}(\state)}
\end{equation*}
\end{corollary}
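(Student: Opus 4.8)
The plan is to mimic the proof of \thmref{thm:state_action_distribution} almost verbatim, the only change being that bare states $\state$ replace state–action pairs $z=(\state,\action)$. The one point that needs genuine care is that $\distState{\policy}$ is an \emph{average} (over time) state distribution, so I must first exhibit it as a bona fide marginal of a joint distribution whose conditional is policy independent.

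Concretely, I would introduce the joint distribution over (trajectory, state) pairs obtained by first sampling a trajectory and then a uniformly random time index:
\[
  Q(\state,\traj) \;=\; \distTraj{\policy}(\traj)\,\Big(\tfrac{1}{\hor}\textstyle\sum_{t=1}^{\hor}\Ind(\state_{t-1}=\state)\Big),
\]
and analogously $P(\state,\traj)$ with $\expert$ in place of $\policy$. By construction the marginal of $Q$ over $\traj$ is $\distTraj{\policy}(\traj)$, and by the footnote following the definition of $\distState{\policy}$ its marginal over $\state$ is exactly $\distState{\policy}(\state)$; likewise for $P$ and $\expert$. Crucially, the conditional $Q(\state\mid\traj)=\tfrac1\hor\sum_{t}\Ind(\state_{t-1}=\state)$ is a function of $\traj$ alone and does not reference the policy, so $P(\state\mid\traj)=Q(\state\mid\traj)$ for all $\state,\traj$. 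This is precisely hypothesis \eref{eq:constant_condition} of \lemref{lem:fdivergence_reduction} with $a\leftarrow\traj$ and $b\leftarrow\state$, and applying that lemma gives
\[
  \sum_\traj \distTraj{\policy}(\traj)\, f\!\left(\frac{\distTraj{\expert}(\traj)}{\distTraj{\policy}(\traj)}\right)
  \;\ge\; \sum_\state \distState{\policy}(\state)\, f\!\left(\frac{\distState{\expert}(\state)}{\distState{\policy}(\state)}\right),
\]
which is the claimed inequality.

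An alternative, perhaps more transparent, route is to derive the corollary directly from \thmref{thm:state_action_distribution}: dropping the action coordinate is a deterministic, policy-independent map $(\state,\action)\mapsto\state$, so a second application of \lemref{lem:fdivergence_reduction} — now with $a\leftarrow(\state,\action)$, $b\leftarrow\state$, and $P(\state\mid\state,\action)=Q(\state\mid\state,\action)=1$ — yields $\fDiv{\distState{\expert}(\state)\expert(\action\mid\state)}{\distState{\policy}(\state)\policy(\action\mid\state)}\ge\fDiv{\distState{\expert}(\state)}{\distState{\policy}(\state)}$, and chaining with \thmref{thm:state_action_distribution} closes the argument. I would note that one cannot instead try to ``marginalize out the state'' or otherwise reuse the action-matching structure, since $\expert(\action\mid\state)\ne\policy(\action\mid\state)$ in general; only reductions that discard information in a policy-independent fashion are admissible.

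The main obstacle is entirely in the first step: making sure the time-averaging hidden in $\distState{\policy}$ is faithfully captured by a joint distribution with a policy-independent conditional, so that \lemref{lem:fdivergence_reduction} applies cleanly. Once that is in place there is no further computation — the result drops out as an immediate corollary of the information-loss lemma, exactly as the theorem does.
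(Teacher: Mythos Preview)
Your proposal is correct and matches the paper's intent: the paper does not give a separate proof but simply notes that ``the theorem also extends to state distributions,'' and your first route is exactly that extension---rerunning the argument of \thmref{thm:state_action_distribution} with $b$ ranging over states rather than state--action pairs, after checking that the conditional $P(\state\mid\traj)=Q(\state\mid\traj)$ is policy-independent. Your explicit construction of the time-averaged joint (and your alternative chaining route through \thmref{thm:state_action_distribution}) are both sound and, if anything, more careful than the paper's brief remark.
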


How tight is this lower bound? We examine the gap 
\begin{corollary}
The gap between the two divergences is
\begin{align*}
\fDiv{P(\traj)}{Q(\traj)} - \fDiv{P(z)}{Q(z)}  = \sum_z P(z) \fDiv{P(\traj|z)}{Q(\traj|z)}
\end{align*}
\end{corollary}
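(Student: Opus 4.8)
The plan is to reuse the computation already carried out inside the proof of \thmref{thm:state_action_distribution} and merely keep track of where the inequality was introduced, so that the ``gap'' becomes an equality. As in that proof, let $\traj$ play the role of the ``rich'' variable and let $z = (\state,\action)$ play the role of the ``coarse'' one, so that $P(z\mid\traj) = Q(z\mid\traj)$ regardless of the policy (a trajectory determines the distribution over which state--action pair is sampled). The chain of equalities \eref{eq:joint_formula}--\eref{eq:apply_constant_condition} in the proof of \lemref{lem:fdivergence_reduction} shows, \emph{before} the log--sum step \eref{eq:apply_log_sum} is invoked, that adjoining the redundant variable $z$ does not change the $f$-divergence:
\[
\fDiv{P(\traj)}{Q(\traj)} \;=\; \fDiv{P(z,\traj)}{Q(z,\traj)} .
\]
This is the first step; the companion corollary (state vs.\ trajectory) follows identically with $z$ taken to be the state alone.

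Next I would decompose $\fDiv{P(z,\traj)}{Q(z,\traj)}$ the ``other way'', conditioning on $z$ rather than on $\traj$. Writing $P(z,\traj) = P(z)\,P(\traj\mid z)$ and $Q(z,\traj) = Q(z)\,Q(\traj\mid z)$ gives $\frac{P(z,\traj)}{Q(z,\traj)} = \frac{P(z)}{Q(z)}\cdot\frac{P(\traj\mid z)}{Q(\traj\mid z)}$, hence
\[
\fDiv{P(z,\traj)}{Q(z,\traj)} \;=\; \sum_{z} Q(z)\sum_{\traj} Q(\traj\mid z)\, f\!\left(\frac{P(z)}{Q(z)}\cdot\frac{P(\traj\mid z)}{Q(\traj\mid z)}\right).
\]
The key manipulation is then to peel off the marginal contribution using the normalization $\sum_{\traj} Q(\traj\mid z)\,\frac{P(\traj\mid z)}{Q(\traj\mid z)} = \sum_{\traj} P(\traj\mid z) = 1$, which yields the chain-rule identity
\[
\fDiv{P(z,\traj)}{Q(z,\traj)} \;=\; \fDiv{P(z)}{Q(z)} \;+\; \sum_{z} P(z)\,\fDiv{P(\traj\mid z)}{Q(\traj\mid z)} .
\]
Combining this with the first step and rearranging produces exactly the claimed expression for $\fDiv{P(\traj)}{Q(\traj)} - \fDiv{P(z)}{Q(z)}$, with $z = (\state,\action)$.

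I expect the decomposition in the last display to be the only real obstacle: for a general convex $f$ the product inside $f(\cdot)$ does not split additively, and the clean separation into a marginal $f$-divergence plus a $P(z)$-weighted conditional $f$-divergence relies on the interplay between the normalization $\sum_{\traj} P(\traj\mid z)=1$ and the form of $f$ (for the forward KL, $f(u)=u\log u$, the logarithm performs precisely this splitting). So the write-up reduces to verifying this identity by direct expansion of both sides, everything else being bookkeeping already done in \appref{sec:state_action_distribution}.
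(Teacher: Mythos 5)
Your route is essentially the paper's: pass from $\fDiv{P(\traj)}{Q(\traj)}$ to the joint divergence $\fDiv{P(z,\traj)}{Q(z,\traj)}$ using $P(z\mid\traj)=Q(z\mid\traj)$, and then decompose the joint by conditioning on $z$. But the obstacle you flag at the end is not a verification detail that "direct expansion" will dispose of --- it is a genuine failure. The chain-rule identity
\begin{equation*}
\fDiv{P(z,\traj)}{Q(z,\traj)} \;=\; \fDiv{P(z)}{Q(z)} \;+\; \sum_z P(z)\,\fDiv{P(\traj\mid z)}{Q(\traj\mid z)}
\end{equation*}
holds for the forward KL, where $f(uv)=uv\log u+uv\log v$ together with the normalization $\sum_{\traj}P(\traj\mid z)=1$ performs exactly the splitting you describe, but it is false for a general convex $f$, and therefore so is the corollary as stated. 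Concretely, let $\traj=(z,w)$ with $z$ and $w$ independent Bernoulli$(3/4)$ under $P$ and independent Bernoulli$(1/4)$ under $Q$ (so $P(z\mid\traj)=Q(z\mid\traj)$ trivially), and take $f(u)=\tfrac{1}{2}|u-1|$, i.e.\ total variation. Then $\fDiv{P(\traj)}{Q(\traj)}=\tfrac{1}{2}$ and $\fDiv{P(z)}{Q(z)}=\tfrac{1}{2}$, so the left-hand side of the corollary is $0$, while the right-hand side is $\sum_z P(z)\,\tvDiv{P(w)}{Q(w)}=\tfrac{1}{2}$.

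For what it is worth, the paper's own proof commits exactly the error you anticipated: its final line asserts $\sum_{\traj}P(\traj\mid z)\bigl[f\bigl(\tfrac{P(\traj\mid z)P(z)}{Q(\traj\mid z)Q(z)}\bigr)-f\bigl(\tfrac{P(z)}{Q(z)}\bigr)\bigr]=\fDiv{P(\traj\mid z)}{Q(\traj\mid z)}$ with no justification, and this equality only goes through for $f(u)=u\log u$ (the proof also weights the joint term by $P(z,\traj)$ where the paper's own definition of $\fDiv{p}{q}$ calls for $Q(z,\traj)$). So you correctly isolated the crux, and the honest resolution is not to "verify the identity by expansion" but to restrict the corollary to the KL case, where both your argument and the paper's are valid.
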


\begin{proof}
\begin{align*}
\sum_{\tau} P(z,\tau) f \bigg( \frac{P(z,\tau)}{Q(z,\tau)} \bigg) - P(z) f \bigg( \frac{P(z)}{Q(z)} \bigg) &= P(z)\sum_{\tau} P(\tau|z) f \bigg( \frac{P(\tau|z)P(z)}{Q(\tau|z)Q(z)} \bigg) - P(z) f \bigg( \frac{P(z)}{Q(z)} \bigg)\\
&= P(z) \bigg( \sum_{\tau} P(\tau|z) f \bigg( \frac{P(\tau|z)P(z)}{Q(\tau|z)Q(z)} \bigg) - f \bigg( \frac{P(z)}{Q(z)}\bigg) \bigg) \\
&= P(z) \bigg( \sum_{\tau} P(\tau|z) f \bigg( \frac{P(\tau|z)P(z)}{Q(\tau|z)Q(z)} \bigg) - \sum_{\tau}P(\tau|z) f \bigg( \frac{P(z)}{Q(z)}\bigg) \bigg) \\
&= P(z) \cdot \fDiv{P(\tau|z)}{Q(\tau|z)}
\end{align*}
where we use $\sum_{\tau}P(\tau|z) = 1$. 
\end{proof}
Let $\mathcal{A}$ be the set of trajectories that contain $z$, i.e., $\mathcal{A} = \{\tau | P(z|\tau) = Q(z|\tau) > 0\}$. The gap is the conditional f-divergence of $\tau \in \mathcal{A}$ scaled by $P(z)$. The gap comes from whether we treat $\tau \in \mathcal{A}$ as separate events (in the case of trajectories) or as the same event (in the case of $z$).
\newpage

\section{Relating $f$-Divergence of Trajectory Distribution with Expected Action Distribution}
\label{sec:action_distribution}

In this section we explore the relation of divergences between induced trajectory distribution and induced action distribution. We begin with a general lemma

\begin{lemma} 
\label{lem:frequency_counts}
Given a policy $\policy$ and a general feature function $\phi(\state,\action)$, the expected feature counts along induced trajectories is the same as expected feature counts on induced state action distribution
\begin{equation}
\sum_{\traj} \distTraj{\policy}(\traj) \left( \sum_{t=1}^T \phi(\state_{t-1}, \action_t) \right) = T \sum_\state \distState{\policy}(\state) \sum_\action \policy(a | s) \phi(s,a)
\end{equation}
\end{lemma}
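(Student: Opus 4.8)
The plan is to expand the left-hand side using the definition of the trajectory distribution $\distTraj{\policy}(\traj)$, swap the finite sum over $t$ with the sum over trajectories, and then recognize that for each fixed $t$ the marginal over $(\state_{t-1}, \action_t)$ is exactly $\distStateTime{\policy}{t-1}(\state)\policy(\action \mid \state)$. Summing these per-timestep contributions and invoking the definition of the average state distribution $\distState{\policy}(\state) = \frac{1}{\hor}\sum_{t=1}^\hor \distStateTime{\policy}{t-1}(\state)$ produces the factor of $T$ on the right-hand side.

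Concretely, I would first write
\begin{equation*}
\sum_{\traj} \distTraj{\policy}(\traj) \left( \sum_{t=1}^\hor \phi(\state_{t-1}, \action_t) \right)
= \sum_{t=1}^\hor \sum_{\traj} \distTraj{\policy}(\traj)\, \phi(\state_{t-1}, \action_t),
\end{equation*}
which is just linearity of the (finite) sums. Then for each fixed $t$, I would marginalize the trajectory distribution down to the pair $(\state_{t-1}, \action_t)$: since $\phi(\state_{t-1},\action_t)$ depends only on these two coordinates, summing $\distTraj{\policy}(\traj)$ over all other coordinates of $\traj$ gives $\sum_{\traj} \distTraj{\policy}(\traj)\,\phi(\state_{t-1},\action_t) = \sum_{\state,\action} \distStateTime{\policy}{t-1}(\state)\,\policy(\action\mid\state)\,\phi(\state,\action)$. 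This uses the factorized form of $\distTraj{\policy}$ and the defining recursion for $\distStateTime{\policy}{t}$ given in the Preliminaries. Finally I would sum over $t$:
\begin{equation*}
\sum_{t=1}^\hor \sum_{\state,\action} \distStateTime{\policy}{t-1}(\state)\,\policy(\action\mid\state)\,\phi(\state,\action)
= \sum_{\state,\action} \left(\sum_{t=1}^\hor \distStateTime{\policy}{t-1}(\state)\right)\policy(\action\mid\state)\,\phi(\state,\action)
= \hor \sum_{\state} \distState{\policy}(\state) \sum_{\action} \policy(\action\mid\state)\,\phi(\state,\action),
\end{equation*}
where the last equality is the definition of $\distState{\policy}$.

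The only slightly delicate step is the marginalization: one must be careful that summing the full trajectory measure $\distTraj{\policy}(\traj) = \initDist(\state_0)\prod_{k=1}^\hor \policy(\action_k\mid\state_{k-1})\transFn{\state_{k-1}}{\action_k}{\state_k}$ over all coordinates other than $(\state_{t-1},\action_t)$ collapses cleanly to $\distStateTime{\policy}{t-1}(\state_{t-1})\policy(\action_t\mid\state_{t-1})$ — the ``future'' factors ($k>t$) sum to $1$ because transition kernels and policies are normalized, and the ``past'' factors ($k<t$) sum to the state-occupancy $\distStateTime{\policy}{t-1}$ by the very recursion that defines it. This is routine but is the heart of the argument; everything else is bookkeeping with finite sums, so no convergence or measurability subtleties arise (the horizon is finite and all spaces are finite).
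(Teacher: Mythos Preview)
Your proposal is correct and follows essentially the same route as the paper: swap the sum over $t$ with the sum over trajectories, marginalize each term to obtain $\distStateTime{\policy}{t-1}(\state)\policy(\action\mid\state)$ (future factors integrate to $1$, past factors collapse to the state-occupancy), and then invoke the definition of $\distState{\policy}$ to pull out the factor of $\hor$. The paper's proof is slightly terser but structurally identical.
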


\begin{proof}
Expanding the LHS we have
\begin{align}
&\sum_{\traj} \distTraj{\policy}(\traj) \left( \sum_{t=1}^T \phi(\state_{t-1}, \action_t) \right) \\
=&\sum_{t=1}^T  \sum_{\traj} \distTraj{\policy} \phi(\state_{t-1}, \action_t) \\
=&\sum_{t=1}^T  \sum_{\state_{t-1}} \distStateTime{\policy}{t-1}(\state_{t-1}) \sum_{\action_t} \policy(\action_t | \state_{t-1}) \phi(\state_{t-1}, \action_t) \sum_{\state_{t+1}} P(\state_{t} | \state_{t-1}, \action_t) \dots \sum_{\state_T} P(\state_T | \state_{T-1}, \action_T) \\
=&\sum_{t=1}^T  \sum_{\state_{t-1}} \distStateTime{\policy}{t-1}(\state_{t-1}) \sum_{\action_t} \policy(\action_t | \state_{t-1}) \phi(\state_{t-1}, \action_t) \label{eq:avg_stat:marginalize} \\
=&\sum_{t=1}^T  \sum_{\state} \distStateTime{\policy}{t-1}(\state) \sum_{\action} \policy(\action | \state) \phi(\state, \action) \label{eq:avg_stat:notime}\\
=&T  \sum_{\state} \distState{\policy}(\state) \sum_{\action} \policy(\action | \state) \phi(\state, \action) \label{eq:avg_stat:avg_dist}
\end{align}	
where \eref{eq:avg_stat:marginalize} is due to marginalizing out the future, \eref{eq:avg_stat:notime} is due to the fact that state space is same across time and \eref{eq:avg_stat:avg_dist} results from applying average state distribution definition.
\end{proof}

We can use this lemma to get several useful equalities such as the average state visitation frequency
\begin{theorem} Given a policy $\policy$, if we do a tally count of states visited by induced trajectories, we recover the average state visitation frequency. 
\label{thm:average_state_dist}
\begin{equation}
\sum_{\traj} \distTraj{\policy}(\traj) \left( \sum_{t=1}^T \frac{1}{T} \Ind(\state_{t-1} = z) \right) = \distState{\policy}(z)
\end{equation}
\end{theorem}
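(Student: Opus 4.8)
The statement is an immediate specialization of \lemref{lem:frequency_counts}: it is exactly the case where the feature function does not depend on the action. Concretely, I would fix the target state $z$ and set $\phi(\state,\action) = \Ind(\state = z)$, a bounded feature that ignores its second argument. Plugging this $\phi$ into \lemref{lem:frequency_counts} gives
\begin{equation*}
\sum_{\traj} \distTraj{\policy}(\traj) \left( \sum_{t=1}^T \Ind(\state_{t-1} = z) \right) = T \sum_\state \distState{\policy}(\state) \sum_\action \policy(\action | \state) \Ind(\state = z).
\end{equation*}

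The next step is to simplify the right-hand side. Since $\Ind(\state = z)$ is constant in $\action$ and $\sum_\action \policy(\action|\state) = 1$, the inner action sum collapses to $\Ind(\state = z)$, so the right-hand side equals $T \sum_\state \distState{\policy}(\state) \Ind(\state = z) = T\,\distState{\policy}(z)$. Finally, pulling the constant $\frac{1}{T}$ out of the sum over $\traj$ on the left-hand side (it does not interact with the trajectory index) yields
\begin{equation*}
\sum_{\traj} \distTraj{\policy}(\traj) \left( \sum_{t=1}^T \frac{1}{T}\Ind(\state_{t-1} = z) \right) = \frac{1}{T} \cdot T\,\distState{\policy}(z) = \distState{\policy}(z),
\end{equation*}
which is the claim.

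There is no real obstacle here — the theorem is a one-line corollary of the already-proved \lemref{lem:frequency_counts}. The only points worth stating carefully are (i) that the chosen $\phi$ is a legitimate ``general feature function'' in the sense of the lemma (it is, being just an indicator, hence bounded and well-defined on $\stateSpace\times\actionSpace$), and (ii) the bookkeeping of the $\frac{1}{T}$ factor and the normalization $\sum_\action \policy(\action|\state)=1$, both of which are routine. If desired, one could equally absorb the $\frac{1}{T}$ into $\phi$ from the start by taking $\phi(\state,\action) = \frac{1}{T}\Ind(\state=z)$, which makes the $T$ on the two sides cancel automatically; I would likely present it that way for brevity.
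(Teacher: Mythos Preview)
Your proposal is correct and matches the paper's own proof exactly: the paper's entire argument is the single line ``Apply \lemref{lem:frequency_counts} with $\phi(\state,\action)= \Ind(\state = z)$,'' and your write-up just spells out the routine simplifications (summing out the action and handling the $1/T$) that this entails.
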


\begin{proof}
Apply \lemref{lem:frequency_counts} with $\phi(\state,\action)= \Ind(\state = z)$
\end{proof}

Unfortunately \lemref{lem:frequency_counts} does not hold for $f$-divergences in general. But we can analyze a subclass of $f$-divergences that satisfy the following triangle inequality:
\begin{equation}
\label{eq:fdiv:triangle_inequality}
 \fDiv{p}{q} \leq \fDiv{p}{r} + \fDiv{r}{q}
\end{equation}
Examples of such divergences are Total Variation distance or Squared Hellinger distance. 

We now show that for such divergences (which are actually distances), we can \emph{upper bound} the $f$-divergence. Contrast this to the \emph{lower bound}
discussed in \appref{sec:state_action_distribution}. The upper bound is attractive because the trajectory divergence is the term that we actually care about bounding.

Also note the implications of the upper bound -- we now need expert labels on states collected by the learner $\state \sim \distState{\policy}$. Hence we need an interactive expert that we can query from arbitrary states. 
\begin{theorem}[Upper bound]
\label{thm:action_distribution}
Given two policies $\pi$ and $\expert$, and f-divergences that satisfy the triangle inequality, divergence between the trajectory distribution is upper bounded by the expected divergence between the action distribution on states induced by $\pi$.
\begin{equation*}
\fDiv{\distTraj{\expert}(\traj)}{\distTraj{\policy}(\traj)} \leq T \expect{\state \sim \distState{\policy}}{\fDiv{\expert(\action | \state)}{\policy(\action | \state)}}
\end{equation*}
\end{theorem}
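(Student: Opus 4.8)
The plan is to run a hybrid (telescoping) argument that leans on the triangle inequality \eref{eq:fdiv:triangle_inequality} assumed in the hypothesis. For $k=0,1,\dots,T$, I would define an interpolating trajectory distribution $\nu_k$ that rolls out $\policy$ for the first $k$ actions and the interactive expert $\expert$ for the remaining $T-k$ actions:
\[
\nu_k(\traj) = \initDist(\state_0)\prod_{t=1}^{k}\policy(\action_t|\state_{t-1})\transFn{\state_{t-1}}{\action_t}{\state_t}\;\prod_{t=k+1}^{T}\expert(\action_t|\state_{t-1})\transFn{\state_{t-1}}{\action_t}{\state_t}.
\]
Then $\nu_0 = \distTraj{\expert}$ and $\nu_T = \distTraj{\policy}$, and applying \eref{eq:fdiv:triangle_inequality} along this chain $T-1$ times gives
\[
\fDiv{\distTraj{\expert}(\traj)}{\distTraj{\policy}(\traj)} = \fDiv{\nu_0}{\nu_T} \leq \sum_{k=1}^{T}\fDiv{\nu_{k-1}}{\nu_k}.
\]

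Next I would evaluate each term $\fDiv{\nu_{k-1}}{\nu_k}$. The point is that $\nu_{k-1}$ and $\nu_k$ share the same $\policy$-generated prefix over $(\state_0,\action_1,\dots,\state_{k-1})$ and the same $\expert$-generated suffix conditioned on $(\state_{k-1},\action_k)$; they differ only in how action $\action_k$ is drawn at state $\state_{k-1}$ ($\expert$ for $\nu_{k-1}$ versus $\policy$ for $\nu_k$). Hence, on the support of $\nu_k$, the likelihood ratio collapses to $\nu_{k-1}(\traj)/\nu_k(\traj) = \expert(\action_k|\state_{k-1})/\policy(\action_k|\state_{k-1})$, which depends on $\traj$ only through the single pair $(\state_{k-1},\action_k)$. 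Marginalizing out everything else — the marginal of $\nu_k$ on $(\state_{k-1},\action_k)$ is $\distStateTime{\policy}{k-1}(\state_{k-1})\policy(\action_k|\state_{k-1})$, cf.\ \lemref{lem:frequency_counts} — yields
\[
\fDiv{\nu_{k-1}}{\nu_k} = \sum_{\state}\distStateTime{\policy}{k-1}(\state)\sum_{\action}\policy(\action|\state)\,f\!\left(\frac{\expert(\action|\state)}{\policy(\action|\state)}\right) = \expect{\state\sim\distStateTime{\policy}{k-1}}{\fDiv{\expert(\action|\state)}{\policy(\action|\state)}}.
\]

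Summing this over $k=1,\dots,T$ and recognizing $\frac{1}{T}\sum_{t=1}^{T}\distStateTime{\policy}{t-1} = \distState{\policy}$ immediately gives the claim $\fDiv{\distTraj{\expert}(\traj)}{\distTraj{\policy}(\traj)} \leq T\,\expect{\state\sim\distState{\policy}}{\fDiv{\expert(\action|\state)}{\policy(\action|\state)}}$. The main obstacle is getting the hybrid bookkeeping exactly right: the interpolation must go ``$\policy$ first, then $\expert$'' (prefix $\policy$, suffix $\expert$), because it is precisely the shared $\policy$-prefix that forces $\state_{k-1}\sim\distStateTime{\policy}{k-1}$ and therefore produces $\distState{\policy}$ — rather than $\distState{\expert}$ — in the final bound; reversing the order would give the wrong state distribution. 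Besides that, one should note that the argument implicitly needs $\expert(\cdot|\state)$ to be absolutely continuous w.r.t.\ $\policy(\cdot|\state)$ on the visited states so the ratios and $f$-divergences are well defined, and that the repeated chaining is valid only because $D_f$ is assumed to obey the triangle inequality (e.g.\ Total Variation, Squared Hellinger).
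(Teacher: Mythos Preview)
Your proposal is correct and is essentially the same hybrid/telescoping argument the paper uses: the paper's intermediate policy $\policyBar$ (play $\policy$ for one step, then $\expert$) is precisely your $\nu_1$, and its recursive unrolling of $\fDiv{\distTraj{\policyBar}(\trajSeg{0}{\hor})}{\distTraj{\policy}(\trajSeg{0}{\hor})}$ via repeated triangle inequality generates exactly your chain $\nu_0,\nu_1,\dots,\nu_T$ one step at a time. Your presentation is a bit cleaner in that you define all the hybrids up front and telescope in one line, but the underlying mechanism---shared $\policy$-prefix forcing $\state_{k-1}\sim\distStateTime{\policy}{k-1}$ and the single-action likelihood ratio collapsing each term to an action-divergence---is identical.
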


\begin{proof}[Proof of \thmref{thm:action_distribution}]

We will introduce some notations to aid in explaining the proof. Let a trajectory segment $\trajSeg{t}{\hor}$ be 
\begin{equation}
	\trajSeg{t}{\hor} = \{ \state_t, \action_{t+1}, \state_{t+1}, \dots, \action_\hor, \state_\hor\}
\end{equation}

Recall that the probability of a trajectory induced by policy $\policy$ is
\begin{equation}
	\distTraj{\policy}(\trajSeg{0}{\hor}) = \initDist(\state_0) \prod \limits_{t=1}^{\hor} \policy(\action_t | \state_{t-1}) \transFn{\state_{t-1}}{\action_t}{\state_{t}}
\end{equation}

We also introduce a non-stationary policy $\policyBar$ that executes $\policy$ and then $\expert$ thereafter. Hence, the probability of a trajectory induced by $\policyBar$ is
\begin{equation}
	\distTraj{\policyBar}(\trajSeg{0}{\hor}) = \initDist(\state_0) \policy(\action_1 | \state_0) \transFn{\state_{0}}{\action_1}{\state_{1}} \prod \limits_{t=2}^{\hor} \expert(\action_t | \state_{t-1}) \transFn{\state_{t-1}}{\action_t}{\state_{t}}
\end{equation}

Let us consider the divergence between distributions $\distTraj{\expert}(\trajSeg{0}{\hor})$ and $\distTraj{\policy}(\trajSeg{0}{\hor})$ and apply the triangle inequality \eref{eq:fdiv:triangle_inequality} with respect to $\distTraj{\policyBar}(\trajSeg{0}{\hor})$
\begin{align}
& \fDiv{ \distTraj{\expert}(\trajSeg{0}{\hor}) }{ \distTraj{\policy}(\trajSeg{0}{\hor}) } \\
\leq &  \fDiv{ \distTraj{\expert}(\trajSeg{0}{\hor}) }{ \distTraj{\policyBar}(\trajSeg{0}{\hor}) } + \fDiv{ \distTraj{\policyBar}(\trajSeg{0}{\hor}) }{ \distTraj{\policy}(\trajSeg{0}{\hor}) } \\
\leq &  \sum_{\trajSeg{0}{\hor}} \distTraj{\policyBar}(\trajSeg{0}{\hor}) f \bigg( \frac{ \distTraj{\expert}(\trajSeg{0}{\hor}) }{ \distTraj{\policyBar}(\trajSeg{0}{\hor}) } \bigg)+ \fDiv{ \distTraj{\policyBar}(\trajSeg{0}{\hor}) }{ \distTraj{\policy}(\trajSeg{0}{\hor}) } \\
\leq &  \sum_{\trajSeg{0}{\hor}} \distTraj{\policyBar}(\trajSeg{0}{\hor}) f \left( \frac{ \initDist(\state_0) \expert(\action_1 | \state_0) \transFn{\state_{0}}{\action_1}{\state_{1}} \prod \limits_{t=2}^{\hor} \expert(\action_t | \state_{t-1}) \transFn{\state_{t-1}}{\action_t}{\state_{t}} }{ \initDist(\state_0) \policy(\action_1 | \state_0) \transFn{\state_{0}}{\action_1}{\state_{1}} \prod \limits_{t=2}^{\hor} \expert(\action_t | \state_{t-1}) \transFn{\state_{t-1}}{\action_t}{\state_{t}} } \right) \\
& + \fDiv{ \distTraj{\policyBar}(\trajSeg{0}{\hor}) }{ \distTraj{\policy}(\trajSeg{0}{\hor}) } \nonumber \\
\leq &  \sum_{\trajSeg{0}{\hor}} \distTraj{\policyBar}(\trajSeg{0}{\hor}) f \left( \frac{ \expert(\action_1 | \state_0) }{ \policy(\action_1 | \state_0) }\right) + \fDiv{ \distTraj{\policyBar}(\trajSeg{0}{\hor}) }{ \distTraj{\policy}(\trajSeg{0}{\hor}) } \\
\leq &  \sum_{\state_0} \initDist(\state_0) \sum_{\action_1} \policy(\action_1 | \state_0 ) f \left( \frac{ \expert(\action_1 | \state_0) }{ \policy(\action_1 | \state_0) }\right) \sum_{\state_1} \transFn{\state_{0}}{\action_1}{\state_{1}} \sum_{\action_1} \dots  + \fDiv{ \distTraj{\policyBar}(\trajSeg{0}{\hor}) }{ \distTraj{\policy}(\trajSeg{0}{\hor}) } \\
\leq &  \sum_{\state_0} \initDist(\state_0) \sum_{\action_1} \policy(\action_1 | \state_0 ) f \left( \frac{ \expert(\action_1 | \state_0) }{ \policy(\action_1 | \state_0) }\right)  + \fDiv{ \distTraj{\policyBar}(\trajSeg{0}{\hor}) }{ \distTraj{\policy}(\trajSeg{0}{\hor}) } \\
\leq &  \sum_{\state} \initDist(\state) \sum_{\action} \policy(\action | \state ) f \left( \frac{ \expert(\action | \state) }{ \policy(\action | \state) }\right)  + \fDiv{ \distTraj{\policyBar}(\trajSeg{0}{\hor}) }{ \distTraj{\policy}(\trajSeg{0}{\hor}) } \\
\leq & \expect{\state \sim \initDist(\state)}{ \fDiv{\expert(\action|\state)}{\policy(\action|\state)} } + \fDiv{ \distTraj{\policyBar}(\trajSeg{0}{\hor}) }{ \distTraj{\policy}(\trajSeg{0}{\hor}) } \label{eq:upper_bound_first_eq}
\end{align}

Expanding the second term we have 
\begin{align}
& \fDiv{ \distTraj{\policyBar}(\trajSeg{0}{\hor}) }{ \distTraj{\policy}(\trajSeg{0}{\hor}) } \\
=& \sum_{\trajSeg{0}{\hor}} \distTraj{\policy}(\trajSeg{0}{\hor}) f \left( \frac{\initDist(\state_0) \policy(\action_1 | \state_0)\transFn{\state_{0}}{\action_1}{\state_{1}} \prod \limits_{t=2}^{\hor} \expert(\action_t | \state_{t-1}) \transFn{\state_{t-1}}{\action_t}{\state_{t}}}{\initDist(\state_0) \policy(\action_1 | \state_0) \transFn{\state_{0}}{\action_1}{\state_{1}} \prod \limits_{t=2}^{\hor} \policy(\action_t | \state_{t-1}) \transFn{\state_{t-1}}{\action_t}{\state_{t}}} \right)\\
=& \sum_{\state_0} \initDist(\state_0) \sum_{\action_1} \policy(\action_1 | \state_0 ) \sum_{\state_1} P(\state_1 | \state_0, \action_1) \dots 
f \left( \frac{\transFn{\state_{0}}{\action_1}{\state_{1}} \prod \limits_{t=2}^{\hor} \expert(\action_t | \state_{t-1}) \transFn{\state_{t-1}}{\action_t}{\state_{t}}}{\transFn{\state_{0}}{\action_1}{\state_{1}} \prod \limits_{t=2}^{\hor} \policy(\action_t | \state_{t-1}) \transFn{\state_{t-1}}{\action_t}{\state_{t}}} \right) \\
=& \sum_{\state_0} \initDist(\state_0) \sum_{\action_1} \policy(\action_1 | \state_0 ) \sum_{\trajSeg{1}{\hor}} \distTraj{\policy}(\trajSeg{1}{\hor}) f \left( \frac{\distTraj{\expert}(\trajSeg{1}{\hor})}{\distTraj{\policy}(\trajSeg{1}{\hor})} \right) \\
=& \sum_{\state_0} \initDist(\state_0) \sum_{\action_1} \policy(\action_1 | \state_0 ) \fDiv{ \distTraj{\expert}(\trajSeg{1}{\hor}) }{ \distTraj{\policy}(\trajSeg{1}{\hor}) } 
\end{align}

We can apply triangle inequality again with respect to $\distTraj{\policyBar}(\trajSeg{1}{\hor})$ to get
\begin{align}
&\sum_{\state_0} \initDist(\state_0) \sum_{\action_1} \policy(\action_1 | \state_0 ) \fDiv{ \distTraj{\expert}(\trajSeg{1}{\hor}) }{ \distTraj{\policy}(\trajSeg{1}{\hor}) } \\
\leq & \sum_{\state_0} \initDist(\state_0) \sum_{\action_1} \policy(\action_1 | \state_0 ) \left[  \fDiv{ \distTraj{\expert}(\trajSeg{1}{\hor}) }{ \distTraj{\policyBar}(\trajSeg{1}{\hor}) } +  \fDiv{ \distTraj{\policyBar}(\trajSeg{1}{\hor}) }{ \distTraj{\policy}(\trajSeg{1}{\hor}) } \right]  \\
\leq & \sum_{\state_0} \initDist(\state_0) \sum_{\action_1} \policy(\action_1 | \state_0 ) \left[ \sum_{\state_1} \transFn{\state_{0}}{\action_1}{\state_{1}} \sum_{\action_2} \policy(\action_2 | \state_1) f \left( \frac{\expert(\action_2 | \state_1)}{\policy(\action_2 | \state_1)} \right)  +  \fDiv{ \distTraj{\policyBar}(\trajSeg{1}{\hor}) }{ \distTraj{\policy}(\trajSeg{1}{\hor}) } \right]  \\
\leq & \sum_{\state} \distStateTime{\policy}{0}(\state) \sum_\action \policy(\action | \state) f \left( \frac{\expert(\action | \state)}{\policy(\action | \state)} \right) + \sum_{\state_0} \initDist(\state_0) \sum_{\action_1} \policy(\action_1 | \state_0 ) \fDiv{ \distTraj{\policyBar}(\trajSeg{1}{\hor}) }{ \distTraj{\policy}(\trajSeg{1}{\hor}) } \\
\leq & \expect{\state \sim \distStateTime{\policy}{0}(\state)}{ \fDiv{\expert(\action|\state)}{\policy(\action|\state)} }  \\ 
& +  \sum_{\state_0} \initDist(\state_0) \sum_{\action_1} \policy(\action_1 | \state_0 ) \sum_{\state_1} \transFn{\state_{0}}{\action_1}{\state_{1}} \sum_{\action_2} \policy(\action_2 | \state_1 ) \fDiv{ \distTraj{\expert}(\trajSeg{2}{\hor}) }{ \distTraj{\policy}(\trajSeg{2}{\hor}) } \nonumber 
\end{align}
Again if we continue to expand $\fDiv{ \distTraj{\expert}(\trajSeg{2}{\hor}) }{ \distTraj{\policy}(\trajSeg{2}{\hor}) }$ and add to \eref{eq:upper_bound_first_eq} we have
\begin{align}
\fDiv{ \distTraj{\expert}(\trajSeg{0}{\hor}) }{ \distTraj{\policy}(\trajSeg{0}{\hor}) }  
&\leq \sum_{t=0}^{\hor-1} \expect{\state \sim \distStateTime{\policy}{t}(\state)}{ \fDiv{\expert(\action|\state)}{\policy(\action|\state)} } \\
&\leq \hor \expect{\state \sim \distState{\policy}(\state)}{ \fDiv{\expert(\action|\state)}{\policy(\action|\state)} } \label{eq:upper_bound_final}
\end{align}

where \eref{eq:upper_bound_final} follows from $\distState{\policy}(\state) = \frac{1}{\hor} \sum_{t=0}^{\hor-1} \distStateTime{\policy}{t}(\state)$

\end{proof}
\newpage

\section{Existing algorithms as different f-divergence minimization}
\label{sec:existing_algorithms}

\paragraph{Behavior Cloning -- Kullback-Leibler (KL) divergence.}

If we use KL divergence $f(u) = u \log (u)$ in our framework's trajectory matching problem:

\begin{equation} 
\label{eq:bc}
\begin{aligned}
D_{KL}(\rho_{\pi^*}(\tau), \rho_{\pi}(\tau)) &= \sum_\tau \rho_{\pi^*}(\tau) \log ( \frac{\rho_{\pi^*}}{\rho_{\pi}} ) = \sum_\tau \rho_{\pi^*}(\tau) \log ( \prod_t \frac{\pi^*(a_t | s_{t-1})}{\pi(a_t | s_{t-1})} ) \\
& = \sum_\tau \rho_{\pi^*}(\tau) \sum_t \log (\frac{\pi^*(a_t | s_{t-1})}{\pi(a_t | s_{t-1})}) \\
& = \mathbb{E}_{s\sim\rho_{\pi^*}, a\sim \pi^*} [\log \pi^*(a|s) - \log \pi(a|s)] \\
\policyLearn & = \min D_{KL}(\rho_{\pi^*}(\tau), \rho_{\pi}(\tau)) \\ 
& = \max \mathbb{E}_{\state \sim \distState{\expert}, \action \sim \expert(\cdot |\state)} \log(\policy(\action | \state))\\
\end{aligned}
\end{equation}

Note that this is exactly the behavior cloning~\citep{Pomerleau88} objective, which tries to minimize a classification loss under the expert's state-action distribution. The loss used in \eref{eq:bc} is the cross entropy loss for multi-class classification. This optimization is also referred to as \emph{M-projection}. A benefit of this method is that it does not rely on any interactions with the environment; data is provided by the expert.

It's well known that behavior cloning often leads to covariant shift problem in practice~\citep{ross2011reduction}. One explanation is that supervised learning errors compound exponentially in time. We can also view this a side-effect of M-projection which can lead to situations where $\policy(a|\state) > 0$, $\expert(a | \state) = 0$. 

\paragraph{Generative Adversarial Imitation Learning (GAIL)~\citep{ho2016generative} -- Jensen-Shannon (JS) divergence.}

Plugging in the JS divergence $f(u) = -(u+1)\log \frac{1+u}{2} + u \log u$ in \eref{eq:fil} we have

\begin{equation}
\label{eq:gail}
\policyLearn = \argminprob{\policy \in \policyClass} \max_{w} \; \expect{ \pair{\state}{\action} \sim \distTraj{\expert} }{\log D_w(\state, \action)} - \expect{ \pair{\state}{\action} \sim \distTraj{\policy} }{ \log (1-D_w(\state, \action))} 
\end{equation}
this matches the GAIL objective (without the entropic regularizer). Note that this is minimizing an estimate of the lower bound of JS divergence. While this requires a more expensive minimax optimization procedure, but at least in practice GAIL appears to outperform behavior cloning on a range of simulated environments. 

\paragraph{Dataset Aggregation (\Dagger)~\citep{ross2011reduction} -- Total Variation (TV) distance.}

If we choose $f\left( u \right) = \frac{1}{2} \abs{u - 1} $ in \eref{eq:prob:fdiv_def}, we get the total variation distance $\tvDiv{p}{q} = \frac{1}{2} \sum_x \abs{p(x) - q(x)} $. TV satisfies the triangle inequalities and hence can be shown to satisfy the following:

\begin{theorem} The Total Variation distance between trajectory distributions is upper bounded by the expected distance between the action distribution on states induced by $\pi$.
\begin{equation}
	\tvDiv{ \distTraj{\expert}(\traj) }{ \distTraj{\policy}(\traj) } \leq \hor \expect{\state \sim \distState{\policy}(\state)}{ \tvDiv{\expert(\action|\state)}{\policy(\action|\state)} } \leq \hor \sqrt{ \expect{\state \sim \distState{\policy}(\state)}{ \klDiv{\expert(\action|\state)}{\policy(\action|\state)} } }
\end{equation}
\end{theorem}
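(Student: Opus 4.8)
The plan is to establish the two inequalities separately. The first inequality, relating the trajectory-level TV distance to the expected action-level TV distance, is an immediate instance of \thmref{thm:action_distribution}: since TV arises from $f(u) = \frac{1}{2}\abs{u-1}$, and this $f$ is convex and induces a divergence satisfying the triangle inequality \eref{eq:fdiv:triangle_inequality} (indeed TV is a genuine metric on probability distributions), the hypotheses of \thmref{thm:action_distribution} are met. Hence
\begin{equation*}
\tvDiv{\distTraj{\expert}(\traj)}{\distTraj{\policy}(\traj)} \leq \hor \expect{\state \sim \distState{\policy}(\state)}{\tvDiv{\expert(\action|\state)}{\policy(\action|\state)}}.
\end{equation*}
So the only genuinely new content is the second inequality, which bounds the per-state TV distance by the square root of the per-state KL divergence, wrapped inside the expectation over $\distState{\policy}$.

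For the second step I would invoke \textbf{Pinsker's inequality}, which states that for any two distributions $p, q$ over the action space, $\tvDiv{p}{q} \leq \sqrt{\tfrac{1}{2}\klDiv{p}{q}}$ (or with the constant absorbed, $\tvDiv{p}{q} \leq \sqrt{\klDiv{p}{q}}$, matching the statement as written). Apply this pointwise with $p = \expert(\cdot|\state)$ and $q = \policy(\cdot|\state)$ to get $\tvDiv{\expert(\action|\state)}{\policy(\action|\state)} \leq \sqrt{\klDiv{\expert(\action|\state)}{\policy(\action|\state)}}$ for every $\state$. Taking the expectation over $\state \sim \distState{\policy}$ preserves the inequality, yielding
\begin{equation*}
\hor \expect{\state \sim \distState{\policy}(\state)}{\tvDiv{\expert(\action|\state)}{\policy(\action|\state)}} \leq \hor \expect{\state \sim \distState{\policy}(\state)}{\sqrt{\klDiv{\expert(\action|\state)}{\policy(\action|\state)}}}.
\end{equation*}
Finally, to move the square root outside the expectation I would apply \textbf{Jensen's inequality} to the concave function $x \mapsto \sqrt{x}$, giving $\expect{}{\sqrt{\klDiv{\cdot}{\cdot}}} \leq \sqrt{\expect{}{\klDiv{\cdot}{\cdot}}}$, which chains together to the claimed bound $\hor\sqrt{\expect{\state \sim \distState{\policy}(\state)}{\klDiv{\expert(\action|\state)}{\policy(\action|\state)}}}$.

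The main obstacle is not any single hard step — each piece is a classical inequality — but rather the bookkeeping of constants and the direction of the KL divergence. In particular one must be careful that the TV appearing in \thmref{thm:action_distribution} matches the $f$ with the factor $\tfrac12$, and that Pinsker is applied with the constant consistent with whatever normalization of TV is in force; depending on the convention, the bound is $\tvDiv{p}{q}\le\sqrt{\tfrac12\klDiv{p}{q}}$ so that an extra $\sqrt{2}$ or $\tfrac1{\sqrt2}$ may float around, which the paper's statement suppresses. One should also double-check that \thmref{thm:action_distribution} genuinely applies to TV (it does, since TV satisfies the triangle inequality), and that the KL direction $\klDiv{\expert}{\policy}$ is the one produced when Pinsker is applied to $\tvDiv{\expert}{\policy}$ — Pinsker is symmetric in the TV argument but the KL bound on the right can be stated with either ordering, and here the forward direction $\klDiv{\expert(\cdot|\state)}{\policy(\cdot|\state)}$ is the natural one since it corresponds to behavior cloning's cross-entropy loss (cf.\ \eref{eq:bc}).
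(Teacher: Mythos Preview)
Your proposal is correct and essentially matches the paper's proof. The only cosmetic difference is the order of the two steps in the second inequality: the paper first applies $\expect{\state}{\tvDiv{\cdot}{\cdot}} \leq \sqrt{\expect{\state}{\tvDiv{\cdot}{\cdot}^2}}$ (which it calls Cauchy--Schwarz, equivalently your Jensen for $\sqrt{\cdot}$) and then Pinsker in the squared form $\tvDiv{\cdot}{\cdot}^2 \leq \klDiv{\cdot}{\cdot}$ inside the expectation, whereas you apply Pinsker pointwise first and Jensen second; the two orderings are interchangeable and give the same bound.
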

\begin{proof}
We first apply \thmref{thm:action_distribution} on total variation distance. Then by Cauchy-Schwartz inequality we have
\begin{equation}
	\expect{\state \sim \distState{\policy}(\state)}{ \tvDiv{\expert(\action|\state)}{\policy(\action|\state)} } \leq \sqrt{ \expect{\state \sim \distState{\policy}(\state)}{ \left( \tvDiv{\expert(\action|\state)}{\policy(\action|\state)} \right)^2 } }
\end{equation}

Finally by Pinsker's inequality we have
\begin{equation}
	\left( \tvDiv{\expert(\action|\state)}{\policy(\action|\state)} \right)^2 \leq \klDiv{\expert(\action|\state)}{\policy(\action|\state)}
\end{equation}
Putting all inequalities together we have the proof. 
\end{proof}

\Dagger solves the  following non i.i.d learning problem
\begin{equation}
\begin{aligned}
\policyLearn &= \argminprob{\pi \in \Pi} \; \expect{\state \sim \distState{\policy}(\state), \action \sim \expert(\action|\state)}{\ell(s,a)}\\
&= \argminprob{\pi \in \Pi} \; \expect{\state \sim \distState{\policy}(\state), \action \sim \expert(\action|\state)}{ - \log \left( \policy(\action|\state) \right)} \\
&= \argminprob{\pi \in \Pi} \; \expect{\state \sim \distState{\policy}(\state), \action \sim \expert(\action|\state)}{ \klDiv{\expert(\action|\state)}{\policy(\action|\state)}} \\
\end{aligned}
\end{equation}

\Dagger reduces this to an iterative supervised learning problem. Every iteration a classification algorithm is called. Let $\epsilon_N = \min_{\pi \in \Pi} \frac{1}{N} \sum_{i=1}^N \expect{\state \sim \distState{\policy_i}(\state)}{ \klDiv{\expert(\action|\state)}{\policy(\action|\state)} }$. Let $\gamma_N$ be the average regret which goes to zero asymptotically, i.e. $\lim_{N \rightarrow \infty} \gamma_N = 0$. \Dagger guarantees that there exists a learnt policy $\policy$ that satisfies the following bound (infinite sample case):
\begin{equation}
\begin{aligned}
\expect{\state \sim \distState{\policy}(\state)}{ \klDiv{\expert(\action|\state)}{\policy(\action|\state)} } \leq \epsilon_N + \gamma_N + \bigo{\frac{\log T}{N}}
\end{aligned}
\end{equation}

Putting all together we have a bound on total variation distance $T\sqrt{ \epsilon_N + \gamma_N + \bigo{\frac{\log T}{N}} }$.

\begin{figure}[!t]
    \centering 
    \includegraphics[width=0.8\textwidth]{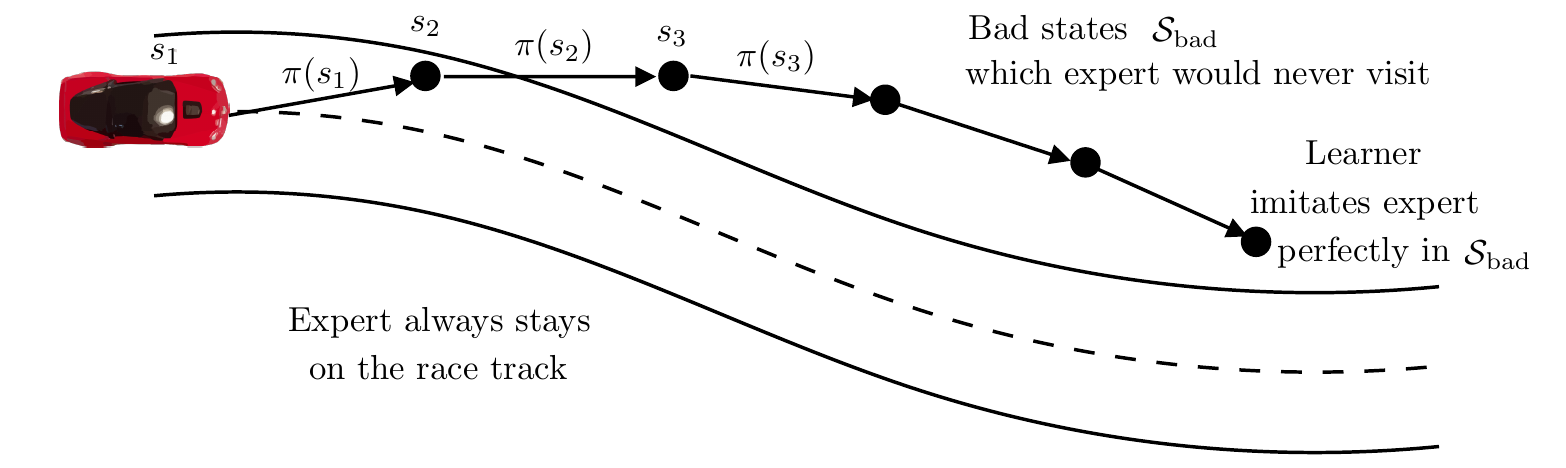}
    \caption{%
    Problem with the \Dagger formulation. The expert policy stays perfectly on the race track. The learner immediately goes off the race track and visits bad states.
    \label{fig:problem_dagger}
}
\end{figure}
We highlight some undesirable behaviors in \Dagger. Consider the example shown in \figref{fig:problem_dagger}. The expert stays on the track and never visits bad states $\state \in \badStateSpace$. The learner, on the other hand, immediately drifts off into $\badStateSpace$. Moreover, for all $\state \in \badStateSpace$, the learner can perfectly imitate the expert. In other words, $\lossFn(\state, \policy) = 0$ for these states. In fact, it is likely that for certain policy classes, this is the optimal solution! At the very least, \Dagger is susceptible to learn such policies as is shown in the counter example in \cite{laskey2017comparing}.

This phenomenon can also be explained from the lens of Total Variation distance. TV measures $\tvDiv{p}{q} = \frac{1}{2} \sum_x \abs{p(x) - q(x)}$. This distance does not penalize the learner going off the track as much as RKL. In this case, RKL would have a very high penalization for reasons mentioned in \sref{sec:divergence}.

\newpage

\section{Reverse KL Divergence via Interactive Variational Imitation}
\label{sec:rkl_action_distribution}

\fIL{RKL} is an approximation of I-projection that minimizes the lower bound of Reverse KL divergence. There are few things we don't like about it. First, it's a double lower bound -- moving to state-action divergence (\thmref{thm:state_action_distribution}) and then variational lower bound \eref{eq:prob:fdiv_lb}. Second, the optimal state action divergence estimator may require a complex function class. Finally, the min-max optimization \eref{eq:fil} may be slow to converge. Interestingly, Reverse KL has a special structure that we can exploit to do even better if we have an interactive expert!

\begin{theorem}
\begin{equation}
\rklDiv{\distTraj{\expert}(\traj)}{\distTraj{\policy}(\traj)} = T  \sum_{\state} \distState{\policy}(\state) \rklDiv{\expert(\action | \state)}{\policy(\action | \state)}
\end{equation}

which means

\begin{equation}
\sum_{\traj} \distTraj{\policy}(\traj) \log \left( \frac{\distTraj{\policy}(\traj)}{\distTraj{\expert}(\traj)} \right) = T  \sum_{\state} \distState{\policy}(\state) \sum_\action \policy(a | s) \log \left( \frac{\policy(a | s)}{\expert(a | s)} \right)
\end{equation}

\begin{proof}
Applying lemma \ref{lem:frequency_counts} with $\phi(\state,\action)=\log \left( \frac{\policy(a | s)}{\expert(a | s)} \right)$
\end{proof}
\end{theorem}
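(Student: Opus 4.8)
The plan is to reduce the claim to \lemref{lem:frequency_counts} by rewriting the trajectory likelihood ratio as a telescoping product over time. First I would write out $\distTraj{\policy}(\traj)$ and $\distTraj{\expert}(\traj)$ using the factorization from the Preliminaries, namely $\distTraj{\policy}(\traj) = \initDist(\state_0) \prod_{t=1}^{T} \policy(\action_t \mid \state_{t-1}) \transFn{\state_{t-1}}{\action_t}{\state_t}$ and likewise for $\expert$. Taking the ratio, the initial-state factor $\initDist(\state_0)$ and every transition factor $\transFn{\state_{t-1}}{\action_t}{\state_t}$ cancel, leaving $\frac{\distTraj{\policy}(\traj)}{\distTraj{\expert}(\traj)} = \prod_{t=1}^{T} \frac{\policy(\action_t \mid \state_{t-1})}{\expert(\action_t \mid \state_{t-1})}$.

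Next I would take logarithms, turning the product into a sum over timesteps: $\log \frac{\distTraj{\policy}(\traj)}{\distTraj{\expert}(\traj)} = \sum_{t=1}^{T} \log \frac{\policy(\action_t \mid \state_{t-1})}{\expert(\action_t \mid \state_{t-1})} = \sum_{t=1}^{T} \phi(\state_{t-1}, \action_t)$, where I set $\phi(\state, \action) = \log \frac{\policy(\action \mid \state)}{\expert(\action \mid \state)}$. Substituting into the definition of reverse KL yields $\rklDiv{\distTraj{\expert}(\traj)}{\distTraj{\policy}(\traj)} = \sum_\traj \distTraj{\policy}(\traj) \log \frac{\distTraj{\policy}(\traj)}{\distTraj{\expert}(\traj)} = \sum_\traj \distTraj{\policy}(\traj) \bigl( \sum_{t=1}^{T} \phi(\state_{t-1}, \action_t) \bigr)$. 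Applying \lemref{lem:frequency_counts} with this $\phi$ equates the right-hand side with $T \sum_\state \distState{\policy}(\state) \sum_\action \policy(\action \mid \state) \phi(\state, \action)$, and substituting the definition of $\phi$ back in gives $T \sum_\state \distState{\policy}(\state) \sum_\action \policy(\action \mid \state) \log \frac{\policy(\action \mid \state)}{\expert(\action \mid \state)} = T \sum_\state \distState{\policy}(\state)\, \rklDiv{\expert(\action \mid \state)}{\policy(\action \mid \state)}$, which is exactly the claim; the second displayed equation in the statement is just this identity written out.

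There is no real obstacle here: the content is entirely carried by \lemref{lem:frequency_counts}, which is a pure identity requiring no convexity, and the cancellation of dynamics is the same telescoping used in the behavior-cloning derivation \eref{eq:bc}. The only point that deserves a sentence is the standing support assumption: the log-ratios must be well defined, i.e. $\expert(\action \mid \state) > 0$ wherever $\policy(\action \mid \state) > 0$, which is precisely the absolute-continuity condition under which $\rklDiv{\distTraj{\expert}(\traj)}{\distTraj{\policy}(\traj)}$ is finite in the first place; under that condition every step above is an equality and the proof is complete.
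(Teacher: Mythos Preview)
Your proof is correct and follows exactly the paper's approach: apply \lemref{lem:frequency_counts} with $\phi(\state,\action)=\log\frac{\policy(\action\mid\state)}{\expert(\action\mid\state)}$, after noting that the dynamics and initial-state factors cancel in the trajectory likelihood ratio. You have simply spelled out the telescoping step that the paper's one-line proof leaves implicit.
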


Note that different from Theorem~\ref{thm:action_distribution}, we have strict equality in the above equation. Hence we can directly minimize action distribution divergence. Since this is on states induced by $\policy$, this falls under the regime of \emph{interactive learning}~\citep{ross2011reduction}. In this regime, we need to query the expert on \emph{states visited by the learner}. Note that this may not always be convenient - the expert is required during training. However, as we will see, it does lead to better estimators. 

We now explore variational imitation approaches similar to \fIL{RKL} for  minimizing action divergence. We get the following update rule:
\begin{equation}
\begin{aligned}
\label{eq:rkl:action}
\policyLearn &= \argminprob{\policy \in \policyClass} \; \expect{\state \sim \distState{\policy}} { \expect{ \action \sim \expert(. | \state) }{- \exp(V_w(\state, \action))} + \expect{  \action \sim \policy(. | \state) }{ V_w(\state, \action)} } \\
\end{aligned}
\end{equation}

We call this \emph{interactive variational imitation} (\ifIL{RKL}). The algorithm is described in \algoref{alg:iRKL}. Unlike \fIL{RKL}, we collect a fresh batch of data from \emph{both} the expert and the learner every iteration. The optimal estimator in this case is $V_{w^*}(\state, \action) = \log \left( \frac{\policy(\action | \state)}{\expert(\action | \state)} \right)$. This can be far simpler to estimate than the optimal estimator for state-action divergence $V_{w^*}(\state, \action) = \log \left( \frac{ \distState{\policy}(\state)\policy(\action | \state) }{ \distState{\expert}(\state)\expert(\action | \state) } \right)$. 

\begin{algorithm}[!htbp]
\caption{\ifIL{RKL} \label{alg:iRKL}}
\begin{algorithmic}[1]
\State Initialize learner and estimator parameters $\theta_0$, $w_0$
\For{$i=0$ \textbf{to} $N-1$}
\State Sample state-action pairs from learner $\tau_i \sim \distTraj{\policy_{\theta_i}}$
\State Query expert on all trajectories $\tau_i$ to get $\action^* \sim \expert( . | \state)$.
\State Update estimator $w_{i+1} \leftarrow w_{i} + \eta_w \nabla_w \left( \expect{\state \sim \tau_i} { \expect{ \action \sim \action^* }{- \exp(V_w(\state, \action))} + \expect{ \action \sim \tau_i }{ V_w(\state, \action)} }\right)$
\State Update policy (using policy gradient) $\theta_{i+1} \gets \theta_i - \eta_\theta \; \expect{ \pair{\state}{\action} \sim \tau_i }{ \nabla_\theta \log \policy_\theta (\action | \state) Q^{V_w}(\state, \action)} )$
\Statex \hspace{3ex} where $Q^{V_w}(\state_{t-1}, \action_t) = \sum\limits_{i=t}^T V_w(\state_{i-1}, \action_i)$
\EndFor
\State \textbf{Return} $\policy_{\theta_N}$ 
\end{algorithmic}
\end{algorithm}
\newpage

\section{Density Ratio Minimization for Reverse KL via No Regret Online Learning}
\label{app:i_RKL}

We continue the argument made in \ref{sec:rkl_action_distribution} for better algorithms for reverse KL minimization. Instead of variational lower bound, we can \emph{upper bound} the action divergence as follows:

\begin{equation}
\begin{aligned}
\label{eq:dre_opt}
  \frac{1}{T}\rklDiv{\distTraj{\expert}(\traj)}{\distTraj{\policy}(\traj)} = \expect{\state \sim \distState{\policy}}{ \expect{\action \sim \policy(.|\state)}{ \log \frac{\policy(\action | \state)}{\expert(\action | \state)}}} \leq \expect{\state \sim \distState{\policy}}{ \expect{\action \sim \policy(.|\state)}{  \frac{ \policy(\action | \state) }{ \expert(\action | \state) } - 1 }} \notag\\
\end{aligned}
\end{equation} where we use the fact that $\log(x) \leq x$ for any $x >0$. To estimate the conditional density ratio $\pi(a|s)/\pi^*(a|s)$, we can leverage an off-shelf density ratio estimator (DRE) \footnote{Given two distributions $p(z)\in\Delta(\mathcal{Z})$ and $q(z) \in \Delta(\mathcal{Z})$ over a finite set $\mathcal{Z}$, the Density Ratio Estimator (DRE) aims to compute an estimator $\hat{r}: \mathcal{Z} \to \mathbb{R}^{+}$ such that, $\hat{r}(z) \approx p(z) / q(z)$ (we assume $q$ has no smaller support than $p$ on $\mathcal{Z}$, i.e., $q(z) = 0$ implies $p(z) = 0$.), with access only to two sets of samples $\{z_i\}_{i=1}^N \sim p$ and $\{z'_i\}_{i=1}^N \sim q$. In this work, we treat DRE as a black box that takes two datasets as input, and returns a corresponding ratio estimator: $\hat{r} = DRE(\{z_i\}_{i=1}^N, \{z'_i\}_{i=1}^N)$. We further assume that DRE achieves small prediction error: $\mathbb{E}_{z\sim q }\left[ |\hat{r}(z) - p(z) / q(z) | \right] \leq \delta \in \mathbb{R}^+$.
} as follows. Rather then directly estimating $\pi(a|s)/\pi^*(a|s)$ for all $s$, we notice that $\pi(a|s)/\pi^*(a|s) = \left(\rho_{\pi}(s)\pi(a|s)\right)/\left(\rho_{\pi}(s)\pi^*(a|s)\right)$. We know how to sample $(s,a)$ from $\rho_{\pi}(s)\pi(a|s)$, and under the interactive setting, we can also sample $(s,a)$ from $\rho_{\pi}(s)\pi^*(a|s)$ by first sampling $s\sim \rho_{\pi}(\cdot)$ and then sample action $a\sim \pi^*(\cdot |s)$, i.e., query expert at state $s$. Given a dataset $D = \{s,a\} \sim \rho_{\pi}\pi$, and a dataset $D^* = \{s,a^*\}\sim \rho_{\pi}\pi^*$, DRE takes the two datasets and returns an estimator: $\hat{r} = DRE(D, D^*)$ such that $\hat{r}(s,a) \approx \frac{\rho_{\pi}(s)\pi(a|s)}{\rho_{\pi}(s)\pi^*(a|s)} = \frac{\pi(a|s)}{\pi^*(a|s)}$. Hence, by just using a classic DRE, we can form a conditional density ratio estimator via leveraging the interactive expert.  

With the above trick to estimate conditional density ratio estimator, now we are ready to introduce our algorithm (Alg.~\ref{alg:i_RKL}). Our algorithm takes a density ratio estimator (DRE) and a cost-sensitive classifier as input.\footnote{A cost sensitive classifier $\mathcal{C}$ takes a dataset $\{s, c\}$ with $c\in \mathbb{R}^{|\mathcal{A}|}$ as input, and outputs a classifier that minimizes the classification cost: $\pi =\arg\min_{\pi\in\Pi} \sum_{i} c_{\pi(s)}$, where we use $c_{a}$ to denote the entry in the cost vector $c$ that corresponds to action $a$.} At the $n$-th iteration, it uses the current policy $\pi_n$ to generate states $s$, and then collect a dataset $\{s, a\}$ with $a\sim \pi_n(\cdot|s)$, and another dataset $\{s,a^*\}$ with $a^* \sim \pi^*(\cdot|s)$. It then uses DRE to learn a conditional density ratio estimator $\hat{r}_n (s,a) \approx \pi_n(a|s) / \pi^*(a|s)$ for any $(s,a)\in\mathcal{S}\times\mathcal{A}$. It then performs data aggregation by aggregating newly generated state cost vector pairs $\{s, \hat{r}_n(\cdot,s)\}$ to the cost-sensitive classification dataset $\mathcal{D}$. We then update the policy to $\pi_{n+1}$ via performing cost-sensitive classification on $\mathcal{D}$.

\begin{algorithm}[!htbp]
\caption{Interactive DRE Minimization \label{alg:i_RKL}}
\begin{algorithmic}[1] 
\State \textbf{Input}: Density Ratio Estimator (DRE) $\mathcal{R}$, expert $\pi^*$, Cost sensitive classifier $\mathcal{C}$
\State Initialize learner $\pi_0$, dataset $\mathcal{D} = \emptyset$
\For{$n=0$ \textbf{to} $N-1$}
	\State $s_0 \sim \rho_0$
	\State Initialize $D = \emptyset$, $D^* = \emptyset$ 
	\For {$e = 0$ \textbf{to} $E$}
		\For {$t = 0$ \textbf{to} ${T-1}$}
			\State Query expert: $a^*_t \sim \pi^*(\cdot| s_t) $
			\State Execute action $a_t \sim \pi_n(\cdot|s_t)$ and receive $s_{t+1}$
			\State  $D = D \cup \{s_t, a_t\}$, $D^* = D^*\cup \{s_t,a_t^*\}$ \label{Line:dre_data}
		\EndFor
	\EndFor
	\State $\hat{r}_n  = \mathcal{R}(D, D^*)$ \Comment{Density Ratio Estimation}
	\State $\mathcal{D} = \mathcal{D}\cup \{s, \hat{r}_n(\cdot, s )\}_{(s,a)\in D}$ \Comment{Data Aggregation}
	\State $\pi_{n+1} = \mathcal{C}(\mathcal{D})$  \Comment{Cost sensitive classification}
	\EndFor
\State \textbf{Return} $\policy_{\theta_N}$ 
\end{algorithmic}
\end{algorithm}



Below we provide an agnostic analysis of the performance of the returned policy from Alg.~\ref{alg:i_RKL}.  Our analysis is reduction based, in a sense that the performance of the learned policy depends on performance of the off-shelf DRE, the performance of the cost sensitive classifier , and the no-regret learning rate.  
Similar to DAgger \cite{ross2011reduction}, note that Alg.~\ref{alg:i_RKL} can be understood as running Follow-The-Leader (FTL)  no-regret online learner on the sequence of loss functions $\hat{\ell}_n(\pi) \triangleq \mathbb{E}_{s\sim \rho_{\pi_n}} \left[\mathbb{E}_{a\sim \pi(\cdot|s)} \left[\hat{r}_n(a,s)-1\right]  \right] $ for $n \in [0, N-1]$, which approximate $\ell_n(\pi) \triangleq \mathbb{E}_{s\sim \rho_{\pi_n}}[\mathbb{E}_{a\sim \pi(\cdot|s)} [r_n(s,a) - 1]]$. We denote $\epsilon_{\text{class}} = \min_{\pi\in \Pi} \frac{1}{N}\sum_{i=0}^{N-1}{\ell}_n(\pi)$ as the minimal cost sensitive classification error one could achieve in hindsight.  Note the $\epsilon_{\text{class}}$ represents the richness of our policy class $\Pi$. If $\Pi$ is rich enough such that $\pi^*\in\Pi$, we have $\epsilon_{\text{class}} = 0$. In general, $\epsilon_{\text{class}}$ decreases when we increase the  representation power of policy class $\Pi$.
Without loss of generality, we also assume the following black-box DRE oracle $\mathcal{R}$ performance:
\begin{align}
\label{eq:dre_guarantee}
\max_{n\in [N]} \mathbb{E}_{s\sim \rho_{\pi_n}} \left[ \mathbb{E}_{a\sim \pi^*(\cdot|s)} \lvert \hat{r}_n(s,a) -  r_n(s,a) \rvert \right] \leq \gamma, 
\end{align} with $r_n(s,a) = \rho_{\pi_n}(s)\pi_n(a|s) / (\rho_{\pi_n}(s)\pi^*_n(a|s))$ being the true ratio.  Note that this is the standard performance guarantee one can get from the theoretical foundation of Density Ratio Estimation. In Appendix~\ref{app:DRE_example} we give an example of DRE with its finite sample performance guarantee in the form of Eq.~\ref{eq:dre_guarantee}.  We also assume that the expert has non-zero probability of trying any action at any state, i.e., $\min_{s,a}\pi^*(a|s) \geq c \in [0,1]$.  We ignore sample complexity here and simply focus on analyzing the quality of the learned policy under the assumption that every round we can draw enough samples to accurately estimate all expectations. Finite sample analysis can be done via standard concentration inequalities. 


\begin{theorem} Run Alg.~\ref{alg:i_RKL} for $N$ iterations. 
Then there exists a policy $\pi\in \{\pi_0, \dots, \pi_{N-1}\}$ such that
\begin{align}
D_{RKL}(\rho_{\pi^*}, \rho_{\pi}) 
\leq T \left(\left(1+\frac{1}{c}\right)\gamma + \epsilon_{\text{class}} + \frac{o(N)}{N}\right). \nonumber
\end{align} 
\label{thm:i_RKL}
\end{theorem}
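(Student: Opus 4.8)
The plan is to upper-bound the \emph{average} of $D_{RKL}(\rho_{\pi^*}(\tau),\rho_{\pi_n}(\tau))$ over the iterates $\pi_0,\dots,\pi_{N-1}$ produced by Alg.~\ref{alg:i_RKL}, since the best iterate can only beat the average. The bound is built from three stacked reductions: an exact per-state decomposition of reverse KL, a no-regret (Follow-The-Leader) argument for the data-aggregation loop, and the density-ratio oracle guarantee \eref{eq:dre_guarantee}, transported across action distributions using the assumption $\min_{s,a}\pi^*(a|s)\ge c$.

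Step one: apply the exact identity from \appref{sec:rkl_action_distribution}, $D_{RKL}(\rho_{\pi^*}(\tau),\rho_{\pi_n}(\tau)) = T\,\mathbb{E}_{s\sim\rho_{\pi_n}}[D_{RKL}(\pi^*(\cdot|s),\pi_n(\cdot|s))]$, then bound each inner divergence using $\log x\le x-1$ at $x = \pi_n(a|s)/\pi^*(a|s)$, the same relaxation as in \eref{eq:dre_opt}. Writing $r_n(s,a) = \pi_n(a|s)/\pi^*(a|s)$, this gives $\tfrac1T D_{RKL}(\rho_{\pi^*}(\tau),\rho_{\pi_n}(\tau))\le \mathbb{E}_{s\sim\rho_{\pi_n}}\mathbb{E}_{a\sim\pi_n(\cdot|s)}[r_n(s,a)-1] = \ell_n(\pi_n)$, so it suffices to control $\tfrac1N\sum_{n}\ell_n(\pi_n)$.

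Step two: observe that the loop is Follow-The-Leader on the empirical losses $\hat{\ell}_n$, since $\pi_{n+1}$ is the cost-sensitive-classification minimizer of $\sum_{i\le n}\hat{\ell}_i$ over the aggregated dataset. The standard DAgger-type regret argument then yields $\tfrac1N\sum_{n}\hat{\ell}_n(\pi_n)\le \tfrac1N\sum_{n}\hat{\ell}_n(\pi^\dagger)+\tfrac{o(N)}{N}$, where $\pi^\dagger\in\argmin_{\pi\in\Pi}\tfrac1N\sum_{n}\ell_n(\pi)$ attains $\tfrac1N\sum_{n}\ell_n(\pi^\dagger)=\epsilon_{\text{class}}$. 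Step three: for any fixed $\pi$, $\hat{\ell}_n(\pi)-\ell_n(\pi) = \mathbb{E}_{s\sim\rho_{\pi_n}}\mathbb{E}_{a\sim\pi(\cdot|s)}[\hat{r}_n(s,a)-r_n(s,a)]$; rewriting the inner expectation over $a\sim\pi^*(\cdot|s)$ brings out the importance weight $\pi(a|s)/\pi^*(a|s)\le 1/c$, so \eref{eq:dre_guarantee} implies $|\hat{\ell}_n(\pi)-\ell_n(\pi)|\le\gamma/c$ (on the learner side, $\pi=\pi_n$, the weight is $r_n$, again at most $1/c$). Chaining steps one--three and tracking the learner-side and comparator-side error terms gives $\tfrac1{TN}\sum_{n}D_{RKL}(\rho_{\pi^*}(\tau),\rho_{\pi_n}(\tau))\le \epsilon_{\text{class}}+\big(1+\tfrac1c\big)\gamma+\tfrac{o(N)}{N}$, and the theorem follows by taking the best iterate.

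The main obstacle is step three: the density-ratio oracle is only assumed accurate in expectation under $a\sim\pi^*(\cdot|s)$, but the reverse-KL surrogate $\ell_n(\pi_n)$ and the empirical loss $\hat{\ell}_n$ are expectations under $a\sim\pi_n$ (or under $\pi^\dagger$), so a change of measure is unavoidable; this is exactly why the lower-bound condition $\pi^*(a|s)\ge c$ is needed and why the bound inflates by $1/c$. A secondary, routine check is confirming that Follow-The-Leader on this sequence of linear cost-sensitive losses is no-regret at the claimed $o(N)$ rate, precisely as in the DAgger analysis of \cite{ross2011reduction}.
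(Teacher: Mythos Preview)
Your proposal is correct and follows essentially the same three-step reduction as the paper: the exact per-state RKL identity, the $\log x\le x-1$ surrogate, FTL/no-regret on the aggregated losses, and a change-of-measure argument to invoke the DRE guarantee.

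The one substantive difference is the choice of comparator in the no-regret step. The paper compares the FTL iterates directly to $\pi^*$; because the DRE guarantee \eref{eq:dre_guarantee} is already stated as an expectation under $a\sim\pi^*(\cdot|s)$, bounding $|\hat{\ell}_n(\pi^*)-\ell_n(\pi^*)|$ requires \emph{no} change of measure and gives a plain $\gamma$, while only the learner side $|\hat{\ell}_n(\pi_n)-\ell_n(\pi_n)|$ needs the importance weight $\le 1/c$, contributing $\gamma/c$. That is where the $(1+\tfrac1c)\gamma$ comes from. You instead compare to the best-in-class $\pi^\dagger$, which handles $\epsilon_{\text{class}}$ more transparently, but then your own Step~three applies on \emph{both} sides with weight $\le 1/c$, yielding $\tfrac{2}{c}\gamma$ rather than the stated $(1+\tfrac1c)\gamma$. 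So your final constant is a small arithmetic slip (since $c\le 1$, $\tfrac{2}{c}\ge 1+\tfrac1c$, so you still get a valid but slightly weaker bound); to recover the exact constant in the theorem, use $\pi^*$ as the comparator on the right-hand side of the regret inequality as the paper does.
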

The detailed proof is deferred to Appendix~\ref{app:i_RKL}. By definition of $o(N)$, we have $\lim_{N\to\infty}o(N)/N \to 0$.  The above theorem indicates that as $N\to \infty$, the inverse KL divergence is upper bounded by $(1+1/c)\gamma + \epsilon_{\text{class}}$. Increasing the representation power of $\Pi$ will decrease $\epsilon_{\text{class}}$ and any performance improvement the density ratio estimation community could make on DRE can be immediately transferred to a performance improvement of Alg.~\ref{alg:i_RKL}.

\subsection{Proof of Theorem~\ref{thm:i_RKL}}
Denote the ideal loss function at iteration $n$ as 
\begin{align}
\label{eq:dagger_idea_loss}
\ell_{n}(\pi) = \mathbb{E}_{s\sim \rho_{\pi_n}}\left[\mathbb{E}_{a\sim \pi(\cdot|s)} [r_n(s,a)] \right],
\end{align} where $r_n(s,a) = \frac{\pi_n(a|s)}{\pi^*(a|s)} - 1$ (note we include the constant $-1$ for analysis simplicity).  Note that we basically treat $r_n(s, a)$ as a cost of $\pi$ classifying to action $a$ at state $s$.  

Of course, we do not have a perfect $r_n(s,a)$, as we cannot access $\pi^*$'s likelihood. Instead we rely on an off-shelf density ratio estimation (DRE) solver to approximate $r_n$ by $\hat{r}_n$. We simply assume that the returned $\hat{r}_n$ has the following performance guarantee:
\begin{align}
\mathbb{E}_{s\sim \rho_{\pi_i}}\left[\mathbb{E}_{s\sim \pi^*(\cdot|s)} |\hat{r}_i(s,a) - r_i(s,a)| \right] \leq \gamma. 
\end{align} Note that this performance guarantee is well analyzed and is directly delivered by existing off-shelf density ratio estimation algorithms (e.g., \cite{nguyen2010estimating,kanamori2012statistical}). Such $\gamma$ in general depends on the richness of the function approximator we use to approximate $r$, and the number of samples we draw from $\rho_{\pi_n}\pi_n$ and $\rho_{\pi_n}\pi^*$, and can be analyzed using classic learning theory tools. In realizable setting (i.e., our hypothesis class contains $r(s,a)$) and in the limit where we have infinitely many samples, $\gamma$ will be zero. The authors in \cite{nguyen2010estimating} analysis $\gamma$ with respect to the number of samples under the realizable assumption.

With $\hat{r}_n$, let us define $\hat{\ell}_n(\pi)$ that approximates $\ell_n(\pi)$ as follows:
\begin{align}
\hat{\ell}_n(\pi) = \mathbb{E}_{s\sim \rho_{\pi_n}}\left[\mathbb{E}_{a\sim \pi(\cdot|s)} [\hat{r}_n(s,a)] \right],
\end{align} where we simply replace $r_n$ by $\hat{r}_n$. Now we bound the difference between $\ell_n(\pi^*)$ and $\hat{\ell}_n(\pi^*)$ using $\gamma$ (the reason we use $\pi^*$ inside $\ell$ and $\hat{\ell}$ will be clear later):
\begin{align}
\label{eq:relation_ell_hat_ell_under_expert}
|\ell_n(\pi^*) - \hat{\ell}_n(\pi^*) | &= |\mathbb{E}_{s\sim \rho_{\pi_n}}\mathbb{E}_{a\sim \pi^*(\cdot|s)} (r_n(s,a) - \hat{r}_n(s,a)) | \nonumber\\
& \leq \mathbb{E}_{s\sim \rho_{\pi_n}}\mathbb{E}_{a\sim \pi^*(\cdot|s)} | r_n(s,a) - \hat{r}_n(s,a)| \leq \gamma,
\end{align} where we simply applied Jenson inequality. 

Note that at this stage, we can see that the Alg.~\ref{alg:i_RKL} is simply using FTL on a sequence of loss functions $\ell_n(\pi)$ for $n\in [N]$.  The no-regret property from FTL immediately gives us the following inequality:
\begin{align}
\sum_{i=1}^N \hat{\ell}_i(\pi_i) - \min_{\pi\in\Pi} \sum_{i=1}^N \hat{\ell_i}(\pi) \leq o(N).
\end{align}
Let us examine the second term on the LHS of the above inequality: $ \min_{\pi\in\Pi} \sum_{i=1}^N \hat{\ell_i}(\pi)$: 
\begin{align}
 \min_{\pi\in\Pi} \frac{1}{N}\sum_{i=1}^N \hat{\ell_i}(\pi) \leq  \frac{1}{N}\sum_{i=1}^N \hat{\ell_i}(\pi^*)  \leq \frac{1}{N}\sum_{i=1}^N \ell_i(\pi^*)  + \gamma = \epsilon_{\text{class}} + \gamma,
\end{align} where we used inequality~\ref{eq:relation_ell_hat_ell_under_expert}, and the fact that $\ell_i(\pi^*) = \mathbb{E}_{s\sim d_{\pi_i}}\mathbb{E}_{a\sim \pi^*(\cdot|s)} r_i(s,a) = 0$ (recall we define $r_i(s,a) = \pi_i(a|s)/\pi^*(a|s) - 1$).  Hence, we get there exists a least one policy $\hat{\pi}$ among $\{\pi_i\}_{i=1}^N$, such that:
\begin{align}
\mathbb{E}_{s\sim \rho_{\hat{\pi}}}\left[\mathbb{E}_{a\sim \hat{\pi}(\cdot|s)} [{r}(s,a)] \right] \leq \gamma + o(N)/N,
\end{align} where we denote ${r}(s,a)$ as the ratio approximator of $\hat{\pi}(a|s)/\pi^*(a|s)$. 

We now link $\hat{\ell}_i(\pi_i)$ and $\ell_i(\pi_i)$ as follows:
\begin{align}
&|\hat{\ell}_i(\pi_i) -  \ell_i(\pi_i) | \leq \mathbb{E}_{s\sim \rho_{\pi_i}}\mathbb{E}_{a\sim \pi_i} |\hat{r}_i(s,a) - r_i(s,a)| \nonumber\\
& \leq \left(\max_{s,a} \frac{\pi_i(a|s)}{\pi^*(a|s)}\right) \mathbb{E}_{s\sim \rho_{\pi_i}}  \mathbb{E}_{a\sim \pi*(a|s)} | \hat{r}_i(s,a) - r_i(s,a)|  \leq  \left(\max_{s,a} \frac{\pi_i(a|s)}{\pi^*(a|s)}\right) \gamma \leq \frac{1}{c} \gamma, 
\end{align}where we assumed $\min_{s,a} \pi^*(a|s) \geq c$, which is a necessary assumption to make $D_{KL}(\rho_{\pi}, \rho_{\pi^*})$ well defined. 

Put everything together, we get:
\begin{align}
D_{RKL}(\rho_{\hat{\pi}}, \rho_{\pi^*}) \leq T \mathbb{E}_{s\sim d_{\hat{\pi}}} \left[\mathbb{E}_{a\sim \hat{\pi}(\cdot|s)} [r(s,a)]\right] \leq T\left(\left(1+\frac{1}{c}\right)\gamma +\epsilon_{\text{class}} + o(N)/N\right).
\end{align} Note the linear dependency on $T$ is not improvable and shows up in the original DAgger as well.





 \subsection{An Example of Density Ratio Estimation and Its Finite Sample Analysis}
\label{app:DRE_example}

We consider the algorithm proposed by \citet{nguyen2010estimating} for density ratio estimation. We also provide finite sample analysis, which is original missing in \cite{nguyen2010estimating}.

 We consider the following general density ratio estimation problem. Given a finite set of elements $\mathcal{Z}$, and two probability distribution over $\mathcal{Z}$, $p \in \Delta(\mathcal{Z})$ and $q\in\Delta(\mathcal{Z})$. We are interested in estimating the ratio $r(z) = p(z) / q(z)$.  The first assumption we use in this section is that $q(z)$ is lower bounded by a constant for any $z$:
 \begin{assum}[Boundness]
 \label{ass:boundness}
 There exits a small positive constant $c\in\mathbb{R}^+$, such that for any $z\in\mathcal{Z}$, we always have $q(z) \geq c$ and $p(z)\geq c$.
 \end{assum} Essentially we assume that $q$ has full support over $\mathcal{Z}$. The above assumption ensures that the ratio is well defined for all $z\in\mathcal{Z}$, and $p(z)/q(z)  \in [ c, 1/c]$. 

 Let us assume that we are equipped with a set of function approximators $\mathcal{G} = \{g: \mathcal{Z} \to  [-1/c, 1/c] \}$ with $a, b\in \mathbb{R}^+$ two positive constants. The second assumption is the realizable assumption:
 \begin{assum}[Realizability]
 We assume that $r(z)\triangleq p(z)/q(z) \in \mathcal{G}$, and $\mathcal{G}$ is discrete, i.e., $\mathcal{G}$ contains finitely many function approximators. 
 \label{ass:realizable}
 \end{assum} Note that for analysis simplicity we assume $\mathcal{G}$ is discrete. As we will show later that $|\mathcal{G}|$ is only going to appear inside a log term, hence $\mathcal{G}$ could contain large number of function approximators. Also, our analysis below uses standard uniform convergence analysis with standard concentration inequality (i.e., Hoeffding's inequality), it is standard to relax the above assumption to continuous $\mathcal{G}$ where $\log(|\mathcal{G}|)$ will be replace by complexity terms such as Rademacher complexity.

 Given two sets of samples, $\{x_i\}_{i=1}^N \sim q$, and $\{y_i\}_{i=1}^N \sim p$, we perform the following optimization:
 \begin{align}
 \label{eq:specific_DRE}
 \hat{g} = \arg\min_{g\in\mathcal{G}} \frac{1}{N}\sum_{i=1}^N (g(x_i))^2 - \frac{2}{N}\sum_{i=1}^N g(y_i).
 \end{align}
 One example is that when $\mathcal{G}$ belongs to some Reproducing Kernel Hilbert Space $\mathcal{H}$ with some kernel $k:\mathcal{Z}\times\mathcal{Z} \to \mathbb{R}^+$, then the above optimization has closed form solution. 

 We are interested in bounding the following risk:
 \begin{align}
 \label{eq:dre_risk}
 \mathbb{E}_{z\sim q } |{\hat{g}(z) - r(z)}|.  
 \end{align}

\begin{theorem}
\label{thm:dre_finite_analysis}
 Fix $\delta\in (0,1)$. Under assumption~\ref{ass:boundness} and assumption~\ref{ass:realizable}, with probability at least $1-\delta$, DRE in Eq~\ref{eq:specific_DRE} returns an ratio estimator, such that:
 \begin{align*}
 \mathbb{E}_{z\sim q} |{\hat{g}(z) - r(z)}| \leq \frac{2}{c} \sqrt{\log(2|\mathcal{G}|/\delta)}N^{-1/4}.
 \end{align*}
 \end{theorem}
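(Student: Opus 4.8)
The plan is to recognize the estimator in \eref{eq:specific_DRE} as empirical risk minimization for a least‑squares surrogate whose population minimizer is exactly the true ratio $r$, and then run a textbook ERM / uniform‑convergence argument. First I would analyze the population objective $L(g) = \mathbb{E}_{z\sim q}[g(z)^2] - 2\,\mathbb{E}_{z\sim p}[g(z)]$. Using the change of measure $\mathbb{E}_{z\sim p}[g(z)] = \sum_z p(z)g(z) = \sum_z q(z) r(z) g(z) = \mathbb{E}_{z\sim q}[r(z)g(z)]$ (valid because $q$ has full support on $\mathcal{Z}$ by Assumption~\ref{ass:boundness}), completing the square gives the identity $L(g) - L(r) = \mathbb{E}_{z\sim q}\big[(g(z)-r(z))^2\big] = \|g-r\|_{L^2(q)}^2 \ge 0$. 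Hence $r$ is the unconstrained minimizer of $L$, and by realizability (Assumption~\ref{ass:realizable}) $r \in \mathcal{G}$, so $r = \arg\min_{g\in\mathcal{G}} L(g)$. Moreover the target quantity satisfies $\mathbb{E}_{z\sim q}|\hat g(z) - r(z)| \le \|\hat g - r\|_{L^2(q)} = \sqrt{L(\hat g) - L(r)}$ by Jensen (equivalently Cauchy--Schwarz), so it suffices to bound the excess risk $L(\hat g) - L(r)$.

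Next I would invoke the standard ERM decomposition. Writing $\hat L(g)$ for the empirical objective in \eref{eq:specific_DRE}, optimality of $\hat g$ gives $\hat L(\hat g)\le \hat L(r)$, whence
$L(\hat g) - L(r) \le \big(L(\hat g) - \hat L(\hat g)\big) + \big(\hat L(r) - L(r)\big) \le 2\sup_{g\in\mathcal{G}} |L(g) - \hat L(g)|$. To control the uniform deviation, I use that under Assumption~\ref{ass:realizable} every $g\in\mathcal{G}$ is bounded in $[-1/c,1/c]$, so $g^2\in[0,1/c^2]$; Hoeffding's inequality bounds $|\tfrac1N\sum_i g(x_i)^2 - \mathbb{E}_q[g^2]|$ and $|\tfrac1N\sum_i g(y_i) - \mathbb{E}_p[g]|$ for each fixed $g$, and a union bound over the finite class $\mathcal{G}$ costs a $\log|\mathcal{G}|$ factor. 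This yields, with probability at least $1-\delta$, $\sup_{g\in\mathcal{G}}|L(g)-\hat L(g)| = \bigo{\tfrac{1}{c^2}\sqrt{\log(|\mathcal{G}|/\delta)/N}}$, so $\|\hat g - r\|_{L^2(q)}^2 = \bigo{\tfrac{1}{c^2}\sqrt{\log(|\mathcal{G}|/\delta)/N}}$. Taking square roots gives the $N^{-1/4}$ rate; finally, using $\big(\log(2|\mathcal{G}|/\delta)\big)^{1/4} \le \sqrt{\log(2|\mathcal{G}|/\delta)}$ (which holds once $2|\mathcal{G}|/\delta \ge e$) and tracking the Hoeffding constants produces the stated bound $\tfrac{2}{c}\sqrt{\log(2|\mathcal{G}|/\delta)}\,N^{-1/4}$. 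Relaxation to continuous $\mathcal{G}$ only replaces $\log|\mathcal{G}|$ by a Rademacher‑type complexity, as remarked after the statement.

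The Hoeffding bookkeeping and the precise splitting of $\delta$ across the union bound are routine. The one genuinely load‑bearing step is the ``complete the square'' identity in the first paragraph: it is what turns the ad hoc surrogate \eref{eq:specific_DRE} into an $L^2(q)$‑projection of $r$ onto $\mathcal{G}$ and thereby makes excess‑risk control equivalent to $L^2(q)$‑error control. The only care needed in the constant chase is noticing that the deviation of the quadratic term $g^2$, whose range $1/c^2$ dominates the range $2/c$ of the linear term when $c<1$, is what drives the final $1/c$ factor after the square root. No structural obstacle is anticipated beyond this.
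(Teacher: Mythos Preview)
Your proposal is correct and follows essentially the same route as the paper: the paper also (i) uses the completing-the-square identity to rewrite $L(g)-L(r)=\|g-r\|_{L^2(q)}^2$, (ii) bounds $\mathbb{E}_q|\hat g-r|$ by $\|\hat g-r\|_{L^2(q)}$ via Jensen, (iii) controls the excess risk by a uniform deviation bound obtained from Hoeffding plus a union bound over the finite class $\mathcal{G}$, and (iv) takes a square root to land at the $N^{-1/4}$ rate. The only cosmetic difference is that the paper bundles the quadratic and linear terms into a single bounded variable $v_i(g)=g(x_i)^2-2g(y_i)$ before applying Hoeffding, whereas you bound the two empirical averages separately; this does not affect the argument or the constants in any essential way.
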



 Below we prove several useful lemmas that will lead us to Theorem~\ref{thm:dre_finite_analysis}.

 \begin{lemma}
 \label{lem:obj_lower_bound}
 For any $g\in\mathcal{G}$, we have:
 \begin{align*}
 \mathbb{E}_{z\sim q} |g(z) - r(z)| \leq \sqrt{\mathbb{E}_{x\sim q} (g(z) - r(z))^2}.
 \end{align*}
 \end{lemma}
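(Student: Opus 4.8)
The plan is to recognize this as a one-line consequence of Jensen's inequality (equivalently, of Cauchy--Schwarz), requiring nothing about the structure of $\mathcal{G}$ beyond finiteness of the relevant expectations. First I would fix $g\in\mathcal{G}$ and regard $Y \triangleq g(z) - r(z)$ as a real-valued random variable with $z$ drawn from $q$. By Assumption~\ref{ass:boundness} and Assumption~\ref{ass:realizable}, both $g$ and $r=p/q$ take values in the bounded interval $[-1/c,\,1/c]$, so $\mathbb{E}_{z\sim q}\lvert Y\rvert$ and $\mathbb{E}_{z\sim q} Y^2$ are finite and everything below is well defined.

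Next I would apply Jensen's inequality to the convex function $\phi(t)=t^2$ and the nonnegative random variable $\lvert Y\rvert$. This gives
\[
\left(\mathbb{E}_{z\sim q}\lvert g(z)-r(z)\rvert\right)^2
= \phi\!\left(\mathbb{E}_{z\sim q}\lvert Y\rvert\right)
\le \mathbb{E}_{z\sim q}\,\phi(\lvert Y\rvert)
= \mathbb{E}_{z\sim q}(g(z)-r(z))^2 .
\]
Since both sides are nonnegative and $\sqrt{\cdot}$ is monotone on $[0,\infty)$, taking square roots yields the claimed bound $\mathbb{E}_{z\sim q}\lvert g(z)-r(z)\rvert \le \sqrt{\mathbb{E}_{z\sim q}(g(z)-r(z))^2}$. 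Equivalently, one can invoke Cauchy--Schwarz directly: $\mathbb{E}_{z\sim q}\big[\lvert g(z)-r(z)\rvert\cdot 1\big] \le \sqrt{\mathbb{E}_{z\sim q}(g(z)-r(z))^2}\,\sqrt{\mathbb{E}_{z\sim q} 1}$, and $\mathbb{E}_{z\sim q}1=1$.

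There is essentially no obstacle in this lemma; the only minor points to be careful about are that the two expectations are finite (handled by the boundedness assumption) and that the square-root step is legitimate (it is, by monotonicity of the square root on the nonnegative reals). I would present it in the compact Jensen form, since that is the step actually needed downstream to pass from the squared risk controlled by the empirical objective in Eq.~\ref{eq:specific_DRE} to the $L_1$-type risk in Eq.~\ref{eq:dre_risk}.
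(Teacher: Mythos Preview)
Your proof is correct and essentially the same as the paper's: the paper simply cites Jensen's inequality with the concavity of $\sqrt{\cdot}$, which is the same content as your application of Jensen to the convex map $t\mapsto t^2$ followed by taking square roots.
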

 The proof of the above simply uses Jensen's inequality with the fact that $\sqrt{x}$ is a concave function.  

Define the true risk as $\ell(g)$:
\begin{align*}
\ell(g) = \mathbb{E}_{z \sim q} [g(z)^2 ] - 2\mathbb{E}_{z\sim p}[g(z)].
\end{align*}
Note that by realizability assumption, $r = \arg\min_{g\in\mathcal{G}} \ell(g)$.
For a fixed $g$, denote $v_i(g) = (g(x_i))^2 - 2 g(y_i)$. Note that $\mathbb{E}_{i}[v_i(g)] = \ell(g)$. Also note that $|v_i(g)| \leq 1/c^2$.

\begin{lemma}[Uniform Convergence]
\label{lem:uniform_conv}
With probability at least $1-\delta$, for all $g\in\mathcal{G}$, we have:
\begin{align*}
|\frac{1}{N}\sum_{i=1}^N v_i(g) - \ell(g)| \leq \frac{2}{c^2}\sqrt{\frac{\log(2|\mathcal{G}|/\delta)}{N}}.
\end{align*}
\end{lemma}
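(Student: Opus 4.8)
\textbf{Proof proposal for Lemma~\ref{lem:uniform_conv}.}
The plan is the standard recipe: a pointwise Hoeffding bound followed by a union bound over the finite class $\mathcal{G}$. First I would fix $g\in\mathcal{G}$ and recall that $v_i(g)=(g(x_i))^2-2g(y_i)$ has $\mathbb{E}_i[v_i(g)]=\mathbb{E}_{z\sim q}[g(z)^2]-2\mathbb{E}_{z\sim p}[g(z)]=\ell(g)$. Since the sample sets $\{x_i\}_{i=1}^N\sim q$ and $\{y_i\}_{i=1}^N\sim p$ are drawn independently, pairing them coordinatewise makes $v_1(g),\dots,v_N(g)$ i.i.d., and by the remark preceding the lemma $|v_i(g)|\le 1/c^2$, so each $v_i(g)$ lies in an interval of length at most $2/c^2$. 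Hoeffding's inequality then gives, for every $\epsilon>0$,
\[
\Pr\!\left[\,\Bigl|\tfrac1N\textstyle\sum_{i=1}^N v_i(g)-\ell(g)\Bigr|>\epsilon\,\right]\le 2\exp\!\left(-\tfrac{N c^4\epsilon^2}{2}\right).
\]

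Second, I would take a union bound over all $g\in\mathcal{G}$, which is legitimate because $\mathcal{G}$ is finite by the discreteness part of Assumption~\ref{ass:realizable}: the event $\bigl|\tfrac1N\sum_i v_i(g)-\ell(g)\bigr|>\epsilon$ holds for some $g$ with probability at most $2|\mathcal{G}|\exp(-Nc^4\epsilon^2/2)$. Setting this quantity equal to $\delta$ and solving for $\epsilon$ yields $\epsilon=\tfrac1{c^2}\sqrt{2\log(2|\mathcal{G}|/\delta)/N}$; bounding $\sqrt2\le 2$ gives the claimed inequality $\tfrac1N\sum_i v_i(g)-\ell(g)$ being within $\tfrac{2}{c^2}\sqrt{\log(2|\mathcal{G}|/\delta)/N}$ uniformly over $g$, on the complementary event of probability at least $1-\delta$.

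The argument is essentially routine; the only two points that need a little care are (i) observing that the independence of the two sample sets makes the per-index summand $v_i(g)$ i.i.d., so Hoeffding can be applied directly to $v_i(g)$ rather than to each of the two empirical averages $\tfrac1N\sum_i g(x_i)^2$ and $\tfrac1N\sum_i g(y_i)$ separately; and (ii) tracking the bounded range of $v_i(g)$ so that the Hoeffding exponent matches the advertised constant. If one instead chooses to control the two empirical means separately, an additional factor of two appears in the union bound (over $2|\mathcal{G}|$ events), which is again absorbed by the slack between $\sqrt2$ and $2$. I do not expect any genuine obstacle here; the real work of the appendix is in chaining this lemma with Lemma~\ref{lem:obj_lower_bound} and the realizability $r=\arg\min_{g\in\mathcal{G}}\ell(g)$ to convert the empirical-risk guarantee into the $L^1(q)$ bound of Theorem~\ref{thm:dre_finite_analysis}.
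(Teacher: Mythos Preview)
Your proposal is correct and matches the paper's approach exactly: the paper simply states that the lemma follows by applying Hoeffding's inequality to $\tfrac{1}{N}\sum_i v_i(g)$ for a fixed $g$ and then taking a union bound over $\mathcal{G}$. Your write-up fills in the constants carefully (range $2/c^2$, hence the exponent $Nc^4\epsilon^2/2$, and the final $\sqrt{2}\le 2$ slack), which is more detail than the paper provides but the same argument.
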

The above lemma can be easily proved by first applying Hoeffding's inequality on $\sum_{i=1}^N v_i/N$ for a fixed $g$, then applying union bound over all $g\in\mathcal{G}$.

\begin{lemma}
\label{lem:constant_shift_obj}
For any $g\in\mathcal{G}$, we have:
 \begin{align*}
 \mathbb{E}_{z\sim q} (g(z) - r(z))^2 = \mathbb{E}_{z\sim q} (g(z))^2 - 2\mathbb{E}_{z\sim p}g(z) + \mathbb{E}_{z\sim p} r(z).
 \end{align*}
 \end{lemma}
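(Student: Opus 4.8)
The plan is to expand the square pointwise and then collapse the two cross terms with a one-line change-of-measure identity. First I would write, for each $z\in\mathcal{Z}$,
\[
(g(z)-r(z))^2 = g(z)^2 - 2\,g(z)\,r(z) + r(z)^2,
\]
and take the expectation with respect to $q$, which splits the left-hand side into $\mathbb{E}_{z\sim q}[g(z)^2] - 2\,\mathbb{E}_{z\sim q}[g(z)r(z)] + \mathbb{E}_{z\sim q}[r(z)^2]$. Since $\mathcal{Z}$ is finite this is just a finite sum, so no convergence or interchange issue arises.

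The key step is the elementary identity $\mathbb{E}_{z\sim q}[h(z)\,r(z)] = \sum_{z} q(z)\,h(z)\,\frac{p(z)}{q(z)} = \sum_z p(z)\,h(z) = \mathbb{E}_{z\sim p}[h(z)]$, valid for any function $h$ because $q$ has full support on $\mathcal{Z}$ by Assumption~\ref{ass:boundness}, so $r = p/q$ is well defined and bounded in $[c,1/c]$. Applying this with $h=g$ rewrites the middle term as $-2\,\mathbb{E}_{z\sim p}[g(z)]$, and applying it with $h=r$ rewrites the last term as $\mathbb{E}_{z\sim p}[r(z)]$. Substituting these back yields exactly
\[
\mathbb{E}_{z\sim q}(g(z)-r(z))^2 = \mathbb{E}_{z\sim q}(g(z))^2 - 2\,\mathbb{E}_{z\sim p}g(z) + \mathbb{E}_{z\sim p} r(z),
\]
which is the claim.

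I do not anticipate any genuine obstacle: the only points needing care are that all sums are finite, so the substitution $r=p/q$ and the reindexing are immediate, and that $r$ is bounded, both guaranteed by Assumption~\ref{ass:boundness}. Conceptually the lemma just says the population DRE objective $\mathbb{E}_{z\sim q}[g(z)^2] - 2\,\mathbb{E}_{z\sim p}[g(z)]$ equals the squared $L^2(q)$ error $\mathbb{E}_{z\sim q}[(g-r)^2]$ minus the $g$-independent constant $\mathbb{E}_{z\sim p}[r(z)]$; this is precisely what makes the empirical minimization in \eref{eq:specific_DRE} a sound surrogate, and it is what will be combined with \lemref{lem:obj_lower_bound} and the uniform-convergence bound of \lemref{lem:uniform_conv} to control the risk \eref{eq:dre_risk} in the proof of \thmref{thm:dre_finite_analysis}.
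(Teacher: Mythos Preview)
Your proof is correct and follows exactly the same approach as the paper: expand the square, take the $q$-expectation, and use the change-of-measure identity $\mathbb{E}_{z\sim q}[h(z)r(z)] = \mathbb{E}_{z\sim p}[h(z)]$ (with $h=g$ and $h=r$) to convert the cross term and the $r^2$ term into expectations under $p$.
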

 \begin{proof}
 From $\mathbb{E}_{z\sim q} (g(z) - r(z))^2$, we complete the square:
 \begin{align*}
 &\mathbb{E}_{z\sim q} (g(z) - r(z))^2 = \mathbb{E}_{z\sim q} g(z)^2 + \mathbb{E}_{z\sim q} r(z)^2 - 2\mathbb{E}_{z\sim q} g(z) r(z) \\
 & = \mathbb{E}_{z\sim q} g(z)^2 + \mathbb{E}_{z\sim p} r(z) - 2 \mathbb{E}_{z\sim p} g(z).
 \end{align*} where we use the fact that that $r(z) = p(z) / q(z)$, and $\mathbb{E}_{z\sim q} r(z)g(z) = \mathbb{E}_{z\sim q} g(z)p(z)/q(z) = \mathbb{E}_{z\sim p} g(z)$.
 \end{proof}

 Now we are ready to prove the main theorem. 
 \begin{proof}[Proof of Theorem~\ref{thm:dre_finite_analysis}]
 We are going to condition on the event that Lemma~\ref{lem:uniform_conv} being hold. Denote $C_N = (2/c^2)\sqrt{\log(2|{\mathcal{G}}|/\delta)/N}$, based on Lemma~\ref{lem:uniform_conv}, we have:
 \begin{align*}
 \ell(\hat{g}) \leq \sum_{i=1}^N v_i(\hat{g}) + C_N \leq \sum_{i=1}^N v_i(r) + C_N \leq \ell(r) + 2 C_N,
 \end{align*} where the first and last inequality uses Lemma~\ref{lem:uniform_conv}, while the second inequality uses the fact that $\hat{g}$ is the minimizer of $\sum_{i=1}^N v_i(g)$, and the fact that $\mathcal{G}$ is realizable.

 Based on Lemma~\ref{lem:constant_shift_obj}, we have:
 \begin{align*}
 \mathbb{E}_{z\sim q}(\hat{g}(z) - r(z))^2 = \ell(\hat{g}) + \mathbb{E}_{z\sim p} r(z) \leq \ell(r) + 2C_N + \mathbb{E}_{z\sim p}r(z) = \mathbb{E}_{z\sim q}(r(z) - r(z))^2 + 2C_N = 2C_N.
 \end{align*}
 Now use Lemma~\ref{lem:obj_lower_bound}, we have:
 \begin{align*}
 \mathbb{E}_{z\sim q} |{\hat{g}(z) - r(z)}| \leq \sqrt{\mathbb{E}_{z\sim q} (\hat{g}(z) - r(z))^2}  \leq \sqrt{2C_N}. 
 \end{align*}Hence we prove the theorem.
 \end{proof}

\newpage
\section{Divide-by-zero Issues with KL and JS} \label{app:dividezero}
 Recall the definition of the $f$-divergence
\begin{equation}
 \fDiv{p}{q} = \sum_x q(x) f \left( \frac{p(x)}{q(x)} \right).
\end{equation}
The core divide-by-zero issue is best illustrated by considering the setting where we have samples from $q$ yet can exactly evaluate $p(x)$ and $q(x)$. At first glance, it may appear the following estimator
\begin{equation}
\mathbb{E}_{x \sim q}  f \left( \frac{p(x)}{q(x)} \right).
\end{equation}
is not unreasonable. However, the issue here is if $\exists x \; \textup{s.t.}\; p(x) > 0, q(x) = 0$, then depending on $f$, this divergence may be infinite (in $f$-divergences, $0*\infty = \infty$. Yet the estimator will never sample the location where $q(x) = 0$ and thus fail to realize the infinite divergence. This issue is particularly pronounced in KL and JS due to their respective $f$ functions.

\newpage

\section{Experimental Details} \label{app:parameterize}

\paragraph{Environmental Noise} For the bandit environment we set control noise $\epsilon_0 = 0.28$, unless otherwise specified. For the grid world we tested with the control noise $\epsilon_1=0.14$ and the transitional noise $\epsilon_2=0.15$.

\paragraph{Parameterization of policies}

Both bandit and grid world environment have discrete actions. To transform the discrete action space into a continuous policy space for policy gradient, we consider the following settings. Bandit's policy is parameterized by one real number $\theta$. Given $\theta$, one can construct a vector $V = [\cos(- \theta - \frac{\pi}{4}), \cos(\theta), \cos(- \theta + \frac{\pi}{4})]$. The probability of executing the discrete actions $a,b,c$ is: $\textrm{softmax}[A(V+1)]$ where we set $A=2.5$ in our experiment. For the grid world, the policy is a matrix $\theta$ of size $N$ where $N$ is the number of states. For state $i$ one can construct a vector $V_i = [cos(0-\theta), cos(\pi/2-\theta), cos(\pi-\theta), cos(-\pi/2-\theta)]$. The probability of executing discrete actions \textit{UP, RIGHT, DOWN, LEFT} at state $i$ is: $\textrm{softmax}[A(V_i+1)]$ where we set $A=2.5$.

\end{document}